\newtheorem{definition}{Definition}[section]
\newtheorem{theorem}{Theorem}[section]
\newtheorem{lemma}[theorem]{Lemma}
\newtheorem*{remark}{Remark}
\newcommand{\td}[1]{\mathrm{#1}}
\newcommand{\D}{\theta}
\newcommand{\G}{\phi}
\newcommand{\tD}{\theta}
\newcommand{\tG}{\phi}
\newcommand{\expt}[2]{\mathbb{E}_{#1} \left[ #2 \right]}
\newcommand{\ode}[1]{\mathtt{ODEStep}({#1})}
\title{
Training Generative Adversarial Networks \\ by Solving Ordinary Differential Equations}
\author{
  \bf
  Chongli Qin*,~
  Yan Wu*,~
  Jost Tobias Springenberg,~
  Andrew Brock,~
  Jeff Donahue,
  \and \bf Timothy P. Lillicrap, Pushmeet Kohli\\DeepMind \\ \texttt{chongliqin, yanwu@google.com}
}
\begin{document}
\maketitle

\begin{abstract}
  The instability of Generative Adversarial Network (GAN) training has frequently been attributed to gradient descent. Consequently, recent methods have aimed to tailor the models and training procedures to stabilise the discrete updates. In contrast, we study the continuous-time dynamics induced by GAN training. Both theory and toy experiments suggest that these dynamics are in fact surprisingly stable. From this perspective, we hypothesise that instabilities in training GANs arise from the integration error in discretising the continuous dynamics. We experimentally verify that well-known ODE solvers (such as Runge-Kutta) can stabilise training -- when combined with a regulariser that controls the integration error. Our approach represents a radical departure from previous methods which typically use adaptive optimisation and stabilisation techniques that constrain the functional space (e.g. Spectral Normalisation). Evaluation on CIFAR-10 and ImageNet shows that our method outperforms several strong baselines, demonstrating its efficacy.
\end{abstract}

\section{Introduction}
The training of Generative Adversarial Networks (GANs)~\citep{goodfellow2014generative} has seen significant advances over the past several years. Most recently, GAN based methods have, for example, demonstrated the ability to generate images with high fidelity and realism such as the work of \citet{brock2018large} and \citet{karras2019style}. Despite this remarkable progress, there remain many questions regarding the instability of training GANs and their convergence properties.

\begin{wrapfigure}{r}[0.cm]{0.5\textwidth}
\vspace{-1.2cm}
 \begin{minipage}{0.5\textwidth}
\begin{figure}[H]
    \centering
    \includegraphics[width=0.45\textwidth]{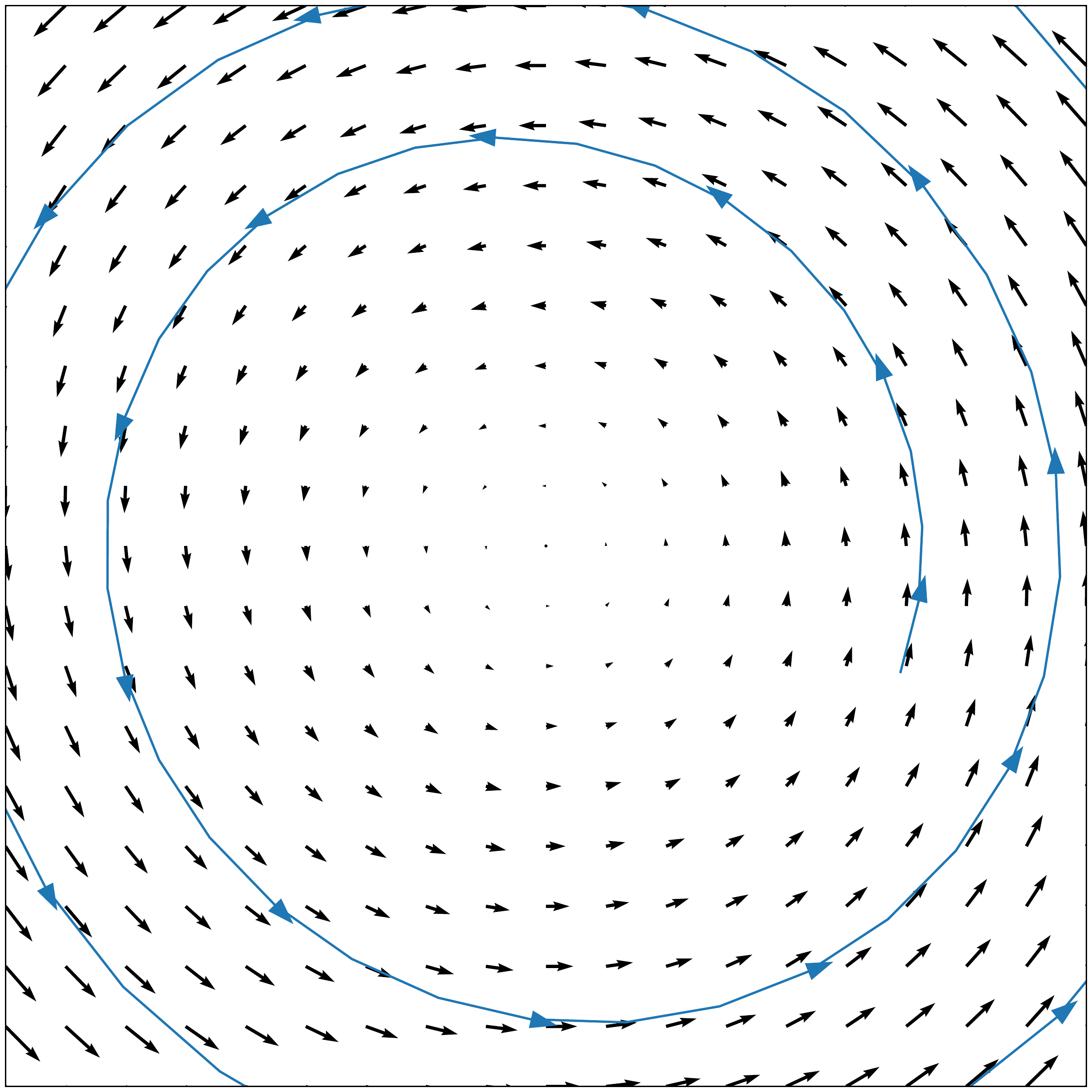}\hspace{0.5cm} 
    \includegraphics[width=0.45\textwidth]{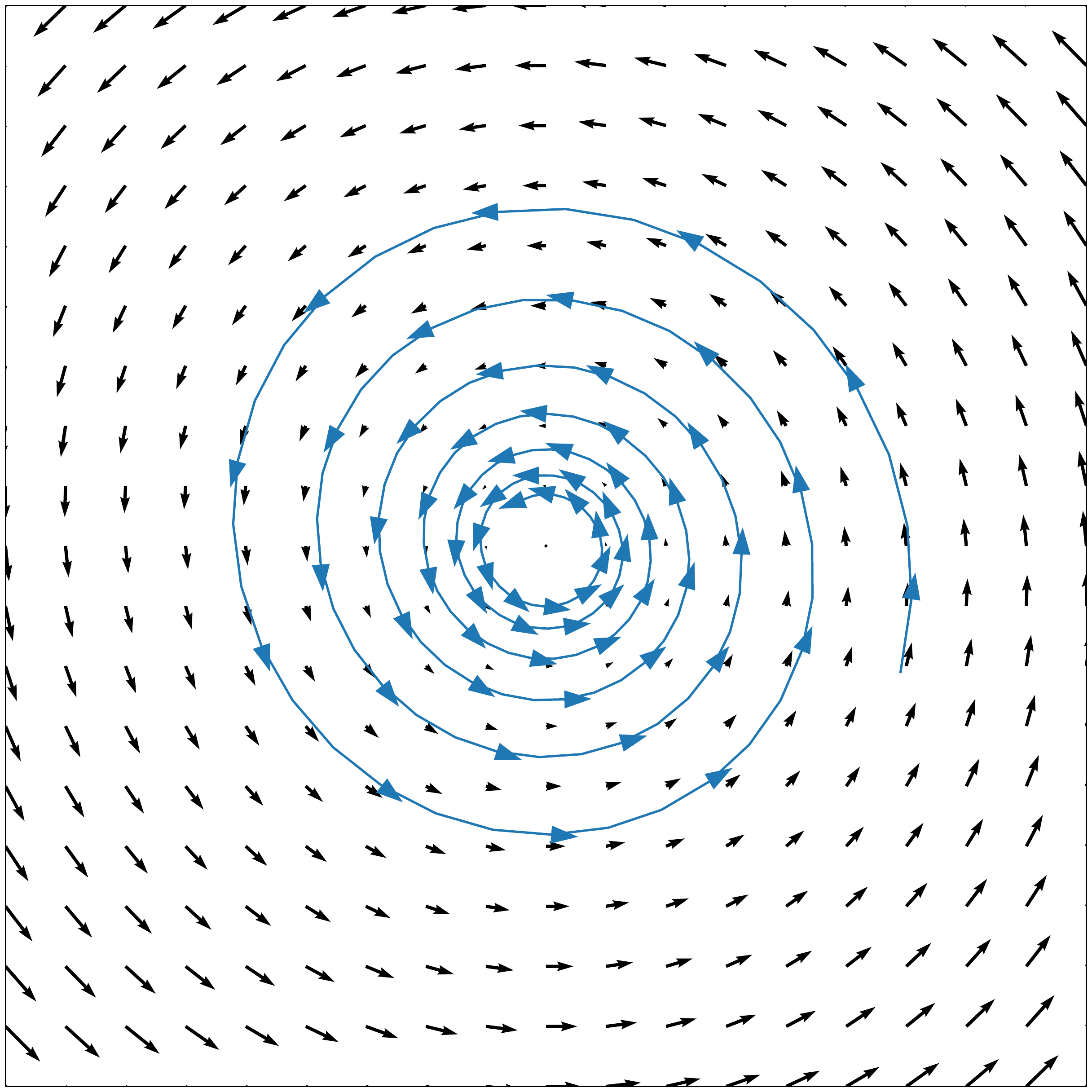}
    \caption{Left: divergence of integration with Euler's method -- for a strong rotational field, step-size 0.01. Right: convergence with a second-order ODE solver (Heun's method) under the same step size and vector field. For details see Section~\ref{sec:strong_rot}.}
    \label{fig:nonconverge}
\end{figure}
 \end{minipage}
 \vspace{-0.4cm}
\end{wrapfigure}
In this work, we attempt to extend the understanding of GANs by offering a different perspective. We study the continuous-time dynamics induced by gradient descent on the GAN objective for commonly used losses. We find that under mild assumptions, the dynamics should converge in the vicinity of a differential Nash equilibrium, and that the rate of convergence is independent of the rotational part of the dynamics if we can follow the dynamics exactly. We thus hypothesise that {\em the instability in training GANs arises from discretisation of the continuous dynamics}, and we should focus on accurate integration to impart stability.

Consistent with this hypothesis, we demonstrate that we can use standard methods for solving ordinary differential equations (ODEs) -- such as Runge-Kutta -- to solve for GAN parameters.
In particular, we observe that more accurate time integration of the ODE yields better convergence as well as better performance overall; a result that is perhaps surprising given that the integrators have to use noisy gradient estimates.
We find that the main ingredient we need for stable GAN training is to avoid large integration errors, and a simple regulariser on the generator gradients is sufficient to achieve this. This alleviates the need for hard constraints on the functional space of the discriminator (e.g. spectral normalisation~\citep{miyato2018spectral}) and enables GAN training without advanced optimisation techniques. 

Overall, the contributions of this paper are as follows:
\begin{itemize}
    \item We present a novel and practical view that frames GAN training as solving ODEs. 
    \item We design a regulariser on the gradients to improve numerical integration of the ODE.
    \item We show that higher-order ODE solvers lead to better convergence for GANs. Surprisingly, our algorithm (ODE-GAN) can train GANs to competitive levels without any adaptive optimiser (e.g. Adam \citep{kingma:adam}) and explicit functional constraints (Spectral Normalisation).
\end{itemize}

\section{Background and Notation}
\label{sect:background}
We study the GAN objective which is often described as a two-player min-max game with a \emph{Nash equilibrium} at the saddle point, $\max_\D \min_\G  \mathcal{J}(\D, \G)$, of the objective function
\begin{align}
    \mathcal{J}(\D, \G)= \expt{x \sim p(x)}{\log(D(x; \D))} + \expt{z \sim p(z)}{\log\left(1- D(G(z; \G); \D) \right)},\label{eq:gan-game}
\end{align}
where we denote the states of the discriminator and the generator respectively by their parameters $\tD$ and $\tG$. 
Following the convention of \citet{goodfellow2014generative}, we use $x \sim p(x)$ to denote a data sample, and $\hat{x} = G(z; \theta)$ for the output of the generator (by transforming a sample from a noise source $z \sim p(z)$). Further, $\expt{x \sim p(x)}{g(x)}$ stands for the expected value of a function $g(x)$ given the distribution $p(x)$.
In practice, the problem is often transformed into one where the objective function is asymmetric (e.g., the generator's objective is changed to $\min_\G \expt{z}{-\log D(G(z; \G), \D)}$). We can describe this more general setting, which we focus on here, by using
\begin{equation}
    \ell(\D, \G) = [\ell_D(\D, \G), ~\ell_G(\D, \G)]\label{eq:loss},
\end{equation} to denote the loss vector of the discriminator-generator pair, and considering minimisation of $\ell_D(\D, \G)$ wrt. $\D$ and $\ell_G(\D, \G)$ wrt. $\G$. This setting is often referred to as a general-sum game. The original min-max problem from Eq. \eqref{eq:gan-game} can be captured in this notation by setting $\ell(\D, \G) = [-\mathcal{J}(\D, \G), \mathcal{J}(\D, \G)]$, and other commonly used variations (such as the non-saturating loss~\citep{goodfellow2014generative}) can also be accommodated.

\section{Ordinary Differential Equations Induced by GANs}
Here we derive the continuous form of GANs' training dynamics by considering the limit of infinitesimal time steps. Throughout, we consider a general-sum game with the loss pair
$
    \ell(\D, \G) = [\ell_D(\D, \G), ~\ell_G(\D, \G)],
$
as described in Section \ref{sect:background}.

\subsection{Continous Dynamics for GAN training}
Given the losses from Eq.~\eqref{eq:loss}, the evolution of the parameters $(\D, \G)$, following simultaneous gradient descent (GD), is given by the following updates at iteration $k$
\begin{align}
    \D_{k+1} &= \D_{k} - \alpha \, \frac{\partial \ell_D}{\partial \D}(\D_{k}, \G_{k}) \, \Delta t \nonumber\\
    \G_{k+1} &= \G_{k} - \beta \,\frac{\partial \ell_G}{\partial \G}(\D_{k}, \G_{k}) \, \Delta t, \label{eq:discrete}
\end{align}
where $\alpha, \beta$
\footnote{For brevity, we set $\alpha = \beta = 1$ in our analysis.}
are optional scaling factors. Then $\alpha \, \Delta t$ and $\beta \, \Delta t$ correspond to the learning rates in the discrete case.
Previous work has analysed the dynamics in this discrete case (mostly focusing on the min-max game), see e.g., \citet{mescheder2017numerics}. In contrast, we consider arbitrary loss pairs under the continuous dynamics induced by gradient descent. That is we consider $\D(t)$ and $\G(t)$ to have explicit dependence on continuous time. This perspective has been taken for min-max games in \citet{Vaishnavh2017}. With this dependence $\D_k = \D(t_0 + k\Delta t)$ and $\G_k= \G(t_0 + k\Delta t)$. Then, as $\Delta t \rightarrow 0$, Eq. \eqref{eq:discrete} yields a dynamical system described by the ordinary differential equation
\begin{align}
\begin{pmatrix}\frac{\td{d} \D}{\td{d} t} \\\frac{\td{d} \G}{\td{d} t}  \end{pmatrix}=  \mathbf{v}(\D, \G),
\label{eq:ode}
\end{align}
where $\mathbf{v} = -[\alpha \, \frac{\partial \ell_D}{\partial \D}, \beta \,\frac{\partial \ell_G}{\partial \G}]$. This is also known as infinitesimal gradient descent~\citep{singh2000nash}. 

Perhaps surprisingly, if we view the problem of training GANs from this perspective we can make the following observation. \emph{Assuming we track the dynamical system exactly -- and the gradient vector field $\mathbf{v}$ is bounded -- then in the vicinity of a differential Nash equilibrium\footnote{We refer to \citet{ratliff2016characterization} for an overview on local and differential Nash equilibria.} $(\theta_c, \phi_c)$, $(\theta, \phi)$ converges to this point at a rate independent of the frequency of rotation with respect to the vector field \footnote{Namely, the magnitude of the imaginary eigenvalues of the Hessian does not affect the rate of convergence.}.} 

There are two direct consequences from this observation: First, changing the rate of rotation of the vector field does not change the rate of convergence. Second, if the velocity field has no subspace which is purely rotational, then it suggests that in principle GAN training can be reduced to the problem of accurate time integration. Thus if the dynamics are attracted towards a differential Nash they should converge (though reaching this regime depends on initial conditions). 

\subsection{Convergence of GAN Training under Continous Dynamics}
We next show that, under mild assumptions, close to a differential Nash equilibrium, the continuous time dynamics from Eq \eqref{eq:ode} converge to the equilibrium point for GANs under commonly used losses. We further clarify that they locally converge even under a strong rotational field as  in Fig. \ref{fig:nonconverge}, a problem studied in recent GAN literature
(see also e.g. ~\citet{balduzzi2018mechanics,gemp:18} and ~\citet{mescheder2017numerics}).

\subsubsection{Analysis of Linearised Dynamics for GANs in the Vicinity of Nash Equilibria}

We here show that, \emph{in the vicinity of a differential Nash equilibrium}, the dynamics converge unless the field has a purely rotational subspace. We prove convergence under this assumption for the continuous dynamics induced by GANs using the cross-entropy or the non-saturated loss. This analysis has strong connections to work by \citet{Vaishnavh2017}, where local convergence was analysed in a more restricted setting for min-max games.

Let us consider the dynamics close to a local Nash equilibrium $(\D^*, \G^*)$ where $\mathbf{v}(\D^*, \G^*)=0$ by definition. We denote the increment as $\delta = [\D, \G] - [\D^*, \G^*]$, then $\dot{\delta} = - H\delta + O(|\delta|^2)$ where $H = - [\frac{\partial \mathbf{v}}{ \partial \D} , \frac{\partial \mathbf{v}}{ \partial \G}]$. This Jacobian has the following form for GANs:
\begin{align}
    H = \left.\begin{pmatrix}
    \frac{\partial^2\ell_{D}}{\partial \D^2} & \frac{\partial^2\ell_{D}}{\partial \G\partial \D} \\
    \frac{\partial^2\ell_{G}}{\partial \D\partial \G } & \frac{\partial^2\ell_{G}}{\partial \G^2}
    \end{pmatrix}\right\lvert_{(\D^*, \G^*)}.
\end{align}
For the cross-entropy loss, the non-saturated loss, and the Wasserstein loss we observe that at the local Nash, $\frac{\partial^2\ell_{D}}{\partial \G\partial \D} =- \frac{\partial^2\ell_{G}}{\partial \D\partial \G}^T$ (please see Appendix~\ref{sec:opposites} for a derivation). Consequently, for GANs, we consider the linearised dynamics of the following form:
\begin{equation}
\begin{pmatrix}\frac{\td{d} \D}{\td{d} t} \\\frac{\td{d} \G}{\td{d} t}  \end{pmatrix}=  - \begin{pmatrix}
A & B^T \\
-B & C\end{pmatrix}\begin{pmatrix}\D \\ \G \end{pmatrix},
\label{eq:linearized}
\end{equation}
where $A, B, C$ denote the elements of $H$. 
\begin{lemma}
Given a linearised vector field of the form shown in Eq.~\eqref{eq:linearized} where either $A\succ 0$ and $C \succeq 0$ or $A \succeq 0$ and $C \succ 0$; and $B$ is full rank. Following the dynamics will always converge to $\mathbf{v} = [0, 0]^T$.
\label{lemma:local_nash}
\end{lemma}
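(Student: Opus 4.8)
The plan is to reduce the lemma to a spectral statement and then prove that statement with a Rayleigh-quotient argument. Write $M = \begin{pmatrix} A & B^T \\ -B & C \end{pmatrix}$, so the dynamics near the equilibrium are $\dot\delta = -M\delta$ (plus $O(|\delta|^2)$ terms that are irrelevant once $-M$ is Hurwitz, by Lyapunov's linearisation theorem). For the linear flow $\delta(t) = e^{-Mt}\delta(0)$ to converge to $\mathbf{0}$ from every initial condition it is sufficient (and, for the linear system, also necessary) that every eigenvalue $\lambda$ of $M$ has $\operatorname{Re}\lambda > 0$. So I would spend the whole proof establishing $\operatorname{Re}\lambda > 0$ for all eigenvalues of $M$.

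First I would use that $A$ and $C$ are symmetric (being Hessians of scalar losses) while the off-diagonal blocks are $B^T$ and $-B$; hence $M + M^T = \operatorname{diag}(2A,\,2C)$, the skew part cancelling. For any eigenpair $Mv = \lambda v$ with $v = (v_1, v_2) \ne 0$ and $v^{\ast}$ the conjugate transpose, this gives $\operatorname{Re}\!\big(v^{\ast} M v\big) = \tfrac12 v^{\ast}(M + M^T) v = v_1^{\ast} A v_1 + v_2^{\ast} C v_2$, and since $v^{\ast} M v = \lambda \|v\|^2$ we obtain $\operatorname{Re}(\lambda)\,\|v\|^2 = v_1^{\ast} A v_1 + v_2^{\ast} C v_2 \ge 0$ under either hypothesis on $(A,C)$. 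This delivers $\operatorname{Re}\lambda \ge 0$ immediately; the real content is upgrading this to a strict inequality.

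The crux — and the step I expect to be the main obstacle — is ruling out purely imaginary eigenvalues. Suppose $\operatorname{Re}\lambda = 0$; then $v_1^{\ast} A v_1 = v_2^{\ast} C v_2 = 0$, since both terms are nonnegative and sum to zero. Take the case $A \succ 0$, $C \succeq 0$ (the other case is symmetric, swapping the roles of $v_1, v_2$, of $A, C$, and of $B, B^T$): then $v_1 = 0$, so the first block row of $Mv = \lambda v$ reads $B^T v_2 = \lambda v_1 = 0$ and the second reads $C v_2 = \lambda v_2$. Because $C$ is symmetric positive semidefinite its spectrum is real and nonnegative, so the purely imaginary $\lambda$ must in fact be $0$, whence $C v_2 = 0$ as well, i.e. $v_2 \in \ker B^T \cap \ker C$. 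This is precisely where full rank of $B$ is used: it forces $\ker B^T = \{0\}$ in the square/matched-dimension case (in a genuinely rectangular situation one additionally needs $C$ to be positive definite on $\ker B^T$, which I would either assume or flag as a caveat), so $v_2 = 0$ and hence $v = 0$, a contradiction. Therefore $M$ has no purely imaginary eigenvalue, and combined with $\operatorname{Re}\lambda \ge 0$ we conclude $\operatorname{Re}\lambda > 0$ for every eigenvalue of $M$, so $\delta(t) \to \mathbf{0}$. The only bookkeeping I would be careful about in the full write-up is this rectangular-$B$ subtlety and checking that the "symmetric case" really is obtained by the stated substitution.
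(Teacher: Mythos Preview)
Your proof is correct and follows essentially the same Rayleigh-quotient strategy as the paper: show $\operatorname{Re}\lambda \ge 0$ via the quadratic form $v^{\ast}Mv$, then rule out $\operatorname{Re}\lambda = 0$ by forcing the eigenvector into the form $(0,v_2)$ with $B^T v_2 = 0$ and $Cv_2 = \lambda v_2$, hence $\lambda$ real, hence $\lambda = 0$, hence a contradiction. The paper carries out the same steps (splitting into real and imaginary parts rather than using $v^{\ast}$, which is cosmetically different but identical in content).

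The one structural difference worth noting is exactly the rectangular-$B$ issue you flag. The paper handles the final contradiction not by invoking $\ker B^T = \{0\}$ directly, but by first proving in a separate lemma (via the Schur complement $C + BA^{-1}B^T$) that $M$ is invertible, and then observing that $\lambda = 0$ would contradict invertibility. Since $\ker(C + BA^{-1}B^T) = \ker C \cap \ker B^T$, this route is equivalent to the condition $\ker C \cap \ker B^T = \{0\}$ that you arrive at --- so your caveat about the genuinely rectangular case applies equally to the paper's argument. In short: same proof, same hidden assumption, slightly different packaging.
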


\begin{proof}
The proof requires three parts: We first show that $H$ of this form must be invertible in Lemma~\ref{lem:invertible},
then we show, as a result, in Lemma~\ref{lem:positive}, that the real part of the eigenvalues for $H$ must be strictly greater than 0. The third part follows from the previous Lemmas: as the eigenvalues of $H$, $\lbrace u_0 + i v_0, \cdots, u_n + i v_n\rbrace$, satisfy $u_i > 0$ the solution of the linear system converges at least at rate $e^{-\min_i(u_i)t}$. Consequently, as $t\rightarrow \infty$, $\mathbf{v}(t) \rightarrow [0, 0]^T$.
\end{proof}

We observe for the cross entropy loss and the non-saturating loss $A = \partial^2 \ell_D/ \partial \D^2 \succeq 0$, if we train with piece-wise linear activation functions -- for details see Appendix~\ref{sec:piecewise}. 
Thus for these losses and if convergence occurs (i.e. under the assumption that one of $A$ or $C$ is positive definite, the other semi-positive definite), we should converge to the differential Nash equilibrium in its vicinity if we follow the dynamics exactly.

\begin{remark}
We note that if we consider a non-hyperbolic fixed point, namely the eigenvalues of the Jacobian of the dynamics have no real-parts, then following the dynamics will only rotate around the fixed point. An example with such a non-hyperbolic fixed point is given by $\ell_D(\D, \G) = \D F \G$, $\ell_G(\D, \G) = - \D F \G$, for any $F\neq0$. However, we argue that this special case is not applicable for neural network based GANs.
To see this, note that the discriminator loss on the real examples is totally independent of the generator. Consequently, in this case we should consider $\ell_D(\D, \G) = f(\D) + \D F \G $.
\end{remark}

\subsubsection{Convergence Under a Strong Rotational Field}\label{sec:strong_rot}
As an illustrative example of Lemma \ref{lemma:local_nash}, and to verify the importance of accurate time integration of the dynamics, we provide a simple toy example with a strong rotational vector field. Consider a two-player game where the loss functions are given by:
\begin{align*}
    \ell_D(\D, \G) = \frac{1}{2} \epsilon \D^2 - \D \G, \quad ~\ell_G(\D, \G) = \D\G,
 \end{align*}
This game maps directly to the linearised form from above (and in addition it satisfies $\partial^2 \ell_D/ \partial \D^2 \succ 0$ for $\epsilon > 0$, as in Lemma~\ref{lemma:local_nash}).
When $\epsilon > 0$, the Nash is at $[0, 0]^T$. The vector field of the system is
\begin{align}
    \begin{pmatrix}\frac{\td{d} \D}{\td{d} t} \\\frac{\td{d} \G}{\td{d} t}  \end{pmatrix}= 
    -\begin{pmatrix}
    \epsilon & -1 \\ 1 & 0
    \end{pmatrix}\begin{pmatrix}
    \D\\ \G
    \end{pmatrix}.
\end{align}
When $\epsilon<2$, it has the analytical solution
\begin{align}
    \D(t) = e^{-\epsilon t/2}(a_0 \cos(\omega t) + b_0 \sin(\omega t)),\\
    \G(t) = e^{-\epsilon t/2}(a_1 \cos(\omega t) + b_1 \sin(\omega t)),
\end{align}
where $\omega = \sqrt{4-\epsilon^2}/2$ and $a_0, a_1, b_0, b_1$ can be determined from initial conditions.
Thus the dynamics will converge to the Nash as $t\rightarrow \infty$ independent of the initial conditions. In Fig.~\ref{fig:nonconverge} we compare the difference of using a first order numerical integrator (Euler's) vs a second order integrator (two-stage Runge-Kutta, also known as Heun's method) for solving the dynamical system numerically when $\epsilon=0.1$. When we chose $\Delta t =0.2$ for 200 timesteps, Euler's method diverges while RK2 converges.
\section{ODE-GANs}
In this section we outline our practical algorithm for applying standard ODE solvers to GAN training; we call the resulting model an ODE-GAN. With few exceptions (such as the exponential function), most ODEs cannot be solved in closed-form and instead rely on numerical solvers that integrate the ODEs at discrete steps. To accomplish this, an ODE solver approximates the ODE's solution as a cumulative sum in small increments. 

We denote the following to be an update using an ODE solver; which takes the velocity function $\mathbf{v}$, current parameter states $(\D_k, \G_k)$ and a step-size $h$ as input:
\begin{equation}
    \begin{pmatrix}
    \D_{k+1} \\ \G_{k+1}
    \end{pmatrix} =
    \ode{\D_k, \G_k, \mathbf{v}, h}.
\end{equation}
Note that $\mathbf{v}$ is the function for the velocity field defined in Eq.~\eqref{eq:ode}.
For an Euler integrator the $\mathtt{ODEStep}$ method would simply compute $[\G_k, \D_k]^T + h \mathbf{v}(\G, \D)$, and is equivalent to simultaneous gradient descent with step-size $h$. However, as we will outline below, higher-order methods such as the fourth-order Runge-Kutta method can also be used. 
After the update step is computed, we add a small amount of regularisation to further control the truncation error of the numerical integrator. A listing of the full procedure is given in Algorithm \ref{alg}.

\subsection{Numerical Solvers for ODEs}
We consider several classical ODE solvers for the experiments in this paper, although any ODE solver may be used with our method.

\begin{wrapfigure}{l}[0.cm]{0.5\textwidth}
\vspace{-0.5cm}
\begin{minipage}{0.48\textwidth}
\begin{algorithm}[H]
   \caption{Training an ODE-GAN}
\begin{algorithmic}
   \Require{Initial states ($\D, \G)$, step size $h$, velocity function $\mathbf{v}(\D, \G) = -[\frac{\partial \ell_D}{\partial \D}, \frac{\partial \ell_G}{\partial \G}]$, regularization multiplier $\lambda$, and initial step counter $i=0$, maximum iteration $I$}
   \If{$i < I$}
      \State $g_{\D} \leftarrow \nabla_{\D} \parallel \frac{\partial \ell_G}{\partial \G}|_{(\D, \phi)} \parallel^2$
      \State $\D, \G \leftarrow \ode{\D, \G, \mathbf{v}, h}$
      \State $\D \leftarrow \D - h\lambda g_{\D}$
      \State $i \leftarrow i + 1$
   \EndIf
  \State\Return{$(\D, \G)$}
\end{algorithmic}
\label{alg}
\end{algorithm}
\end{minipage}
\vspace{-0.3cm}
\end{wrapfigure}
\textbf{Different Orders of Numerical Integration} We experimented with a range of solvers with different orders. The order controls how the truncation error changes with respect to step size $h$. Namely, the errors of first order methods reduce linearly with decreasing $h$, while the errors of second order methods decrease quadratically. The ODE solvers considered in this paper are: first order Euler's method, a second order Runge Kutta method -- Heun's method -- (RK2), and fourth order Runge-Kutta method (RK4). For details on the explicit updates to the GAN parameters applied by each of the methods we refer to Appendix~\ref{sec:update_steps}. Computational costs for calculating $\mathtt{ODEStep}$ grow with higher-order integrators; in our implementation, the most expensive solver considered (RK4) was less than $2 \times$ slower (in wall-clock time) than standard GAN training.

\textbf{Connections to Existing Methods for Stabilizing GANs} We further observe (derivation in Appendix~\ref{sec:approximates}) that existing methods such as Consensus Optimization~\citep{mescheder2017numerics}, Extragradient~\citep{korpelevich1976extragradient, chavdarova2019reducing} and Symplectic Gradient Adjustment~\citep{balduzzi2018mechanics} can be seen as approximating higher order integrators.

\subsection{Practical Considerations for Stable integration of GANs}
\label{sect:sect_reg}
To guarantee stable integration, two issues need to be considered: exploding gradients and the noise from mini-batch gradient estimation.

\textbf{Gradient Regularisation.} When using deep neural networks in the GAN setting, gradients are not \emph{a priori} guaranteed to be bounded (as required for ODE integration).
In particular, looking at the form of $\mathbf{v}(\D, \G)$ we can make two observations. First, the discriminator's gradient is grounded by real data and we found it to not explode in practice. Second, the generator's gradient $-\frac{\partial \ell_G}{\partial \G}$ can explode easily, depending on the discriminator's functional form during learning\footnote{We have observed large gradients when the generator samples are poor, where the
discriminator may perfectly distinguish the generated samples from data. This potentially sharp decision boundary can drive the magnitude of the generator’ gradients to infinity.}. This prevents us from using even moderate step-sizes for integration. However, in contrast to solving a standard ODE, the dynamics in our ODE are given by learned functions. Thus we can control the gradient magnitude to some extent. To this end, we found that a simple mitigation is to regularise the discriminator such that the gradients are bounded. In particular we use the regulariser
\begin{equation}
    R(\theta) = \lambda \left \| \frac{\partial \ell_G}{\partial \G} \right \|^2,
    \label{eq:reg}
\end{equation}
whose gradient $\nabla_{\D} R(\D)$ wrt. the discriminator parameters $\theta$ is well-defined in the GAN setting. Importantly, since this regulariser vanishes as we approach a differential Nash equilibrium, it does not change the parameters at the equilibrium. Standard implementation of this regulariser incurs extra cost on each Runge-Kutta step even with efficient double-backpropation \cite{pearlmutter1994fast}. Empirically, we found that modifying the first gradient step is sufficient to control integration errors (see Algorithm~\ref{alg} and Appendix~\ref{sec:grad_trunc} for details).

This regulariser is similar to the one proposed by \citet{Vaishnavh2017}. While they suggest adjusting updates to the generator (using the regularizer $(\nicefrac{\partial \ell_D}{\partial \D})^2$), we found this did not control the gradient norm well; as our experiments suggest that it is the gradient of the loss wrt. the generator parameters that explodes in practice. It also shares similarities with the penalty term from \citet{gulrajani2017improved}. However, we regularise the  gradient with respect to the parameters rather than the input. 

\textbf{A Note on Using Approximate Gradients.} Practically, when numerically integrating the GAN equations, we resort to Monte-Carlo approximation of the expectations in $\mathcal{J}(\G, \D)$. As is common, we use a mini-batch of $N$ samples from a fixed dataset to approximate expectations involving $p(x)$ and use the same number of random samples from $p(z)$ to approximate respective expectations.
Calculating the partial derivatives $\frac{\partial \ell_D}{\partial \D}, \frac{\partial \ell_G}{\partial \G}$ based on these Monte-Carlo estimates leads to approximate gradients, i.e. we use $\tilde{v}(\D, \G) \approx v(\D, \G) + \zeta$, where $\zeta$ approximates the noise introduced due to sampling. We note that, the continuous GAN dynamics themselves are not stochastic (i.e. they do not form a stochastic differential equation) and noise purely arises from approximation errors, which decrease with more samples. While one could expect the noise to affect integration quality we empirically observed competitive results with integrators of order greater than one. 
\vspace{-0.1cm}
\section{Experiments}
\vspace{-0.1cm}
We evaluate ODE-GANs on data from a mixture of Gaussians, CIFAR-10 \citep{krizhevsky2009learning} (unconditional), and ImageNet \citep{imagenet_cvpr09} (conditional). All experiments use the non-saturated loss $\ell_\G = \expt{z}{-\log D(G(z))}$, which is also covered by our theory.
Remarkably, our experiments suggest that simple Runge-Kutta integration with the regularisation is competitive (in terms of FID and IS) with common methods for training GANs, while simultaneously keeping the discriminator and generator loss close to the true Nash payoff values at convergence -- of $\log(4)$ for the discriminator and $\log(2)$ for the generator. Experiments are performed \emph{without} Adam \citep{kingma:adam} unless otherwise specified and we evaluate IS and FID on 50k images. We also emphasise inspection of the behaviour of training at convergence \emph{and not just at the best scores over training} to determine whether a stable fixed point is found. Code is available at \url{https://github.com/deepmind/deepmind-research/tree/master/ode_gan}.

\vspace{-0.1cm}
\subsection{Mixture of Gaussians}
\vspace{-0.1cm}

\begin{figure}[t]
\vspace{-1cm}
    \centering
    \includegraphics[width=0.17\textwidth]{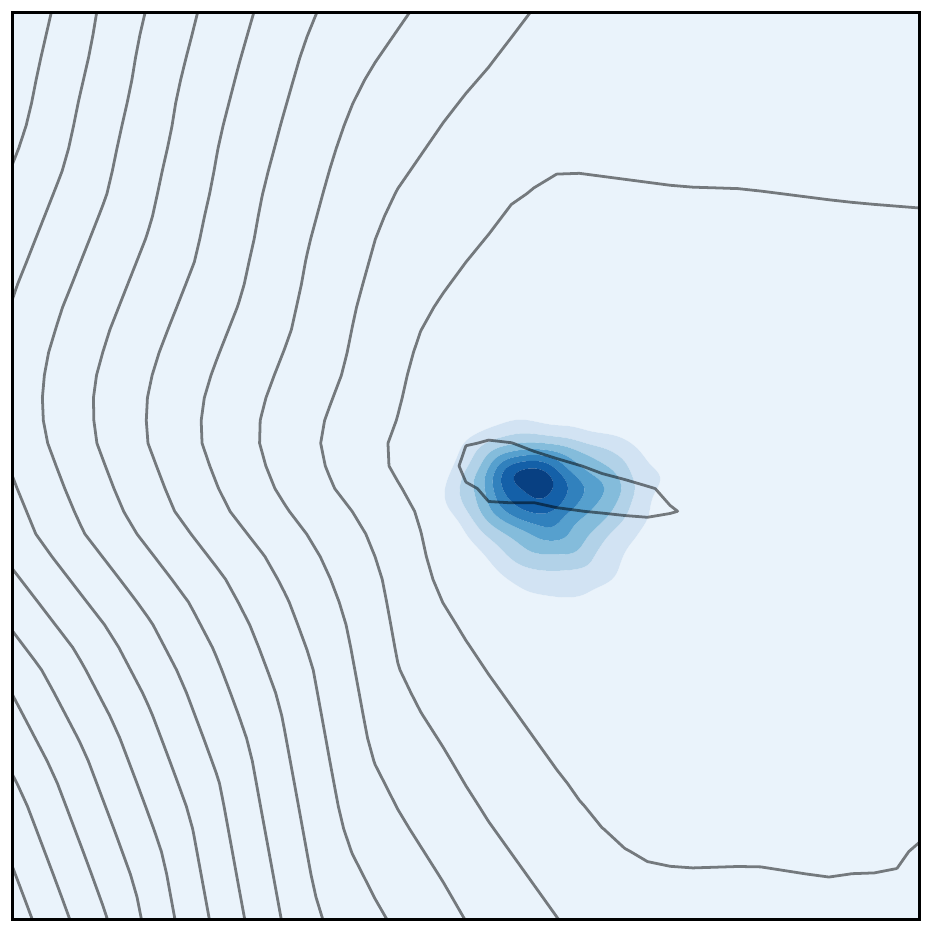}\includegraphics[width=0.17\textwidth]{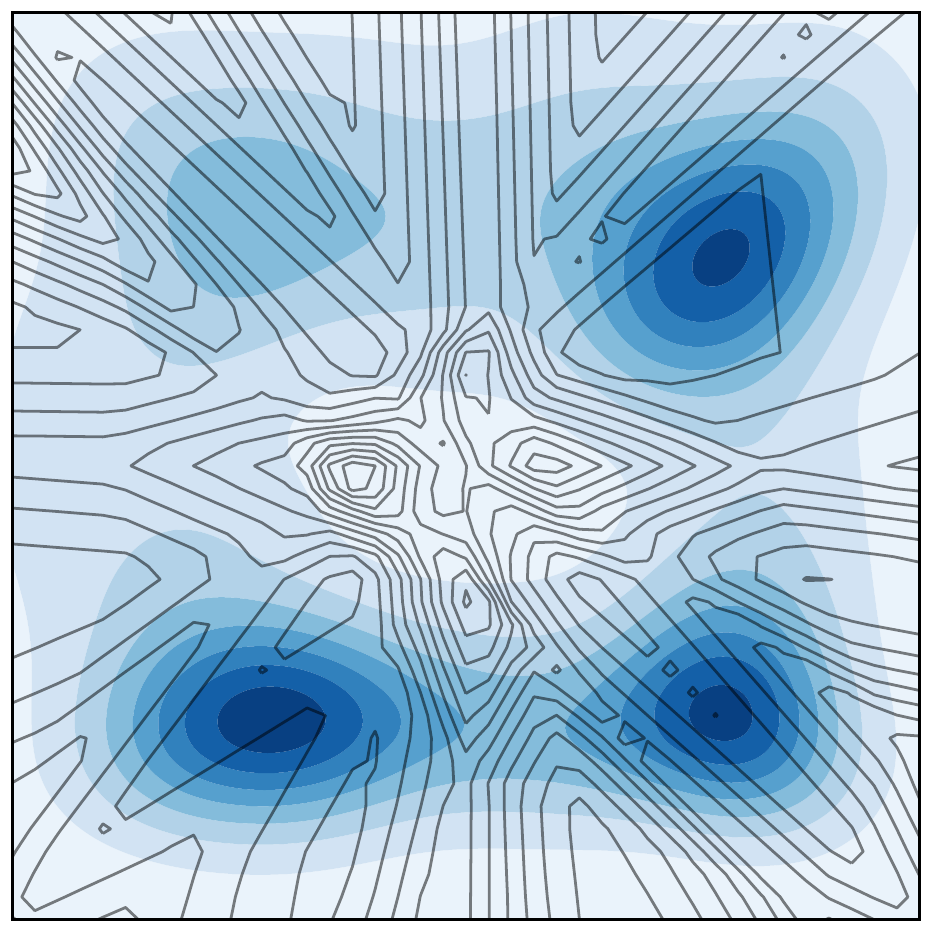}\includegraphics[width=0.17\textwidth]{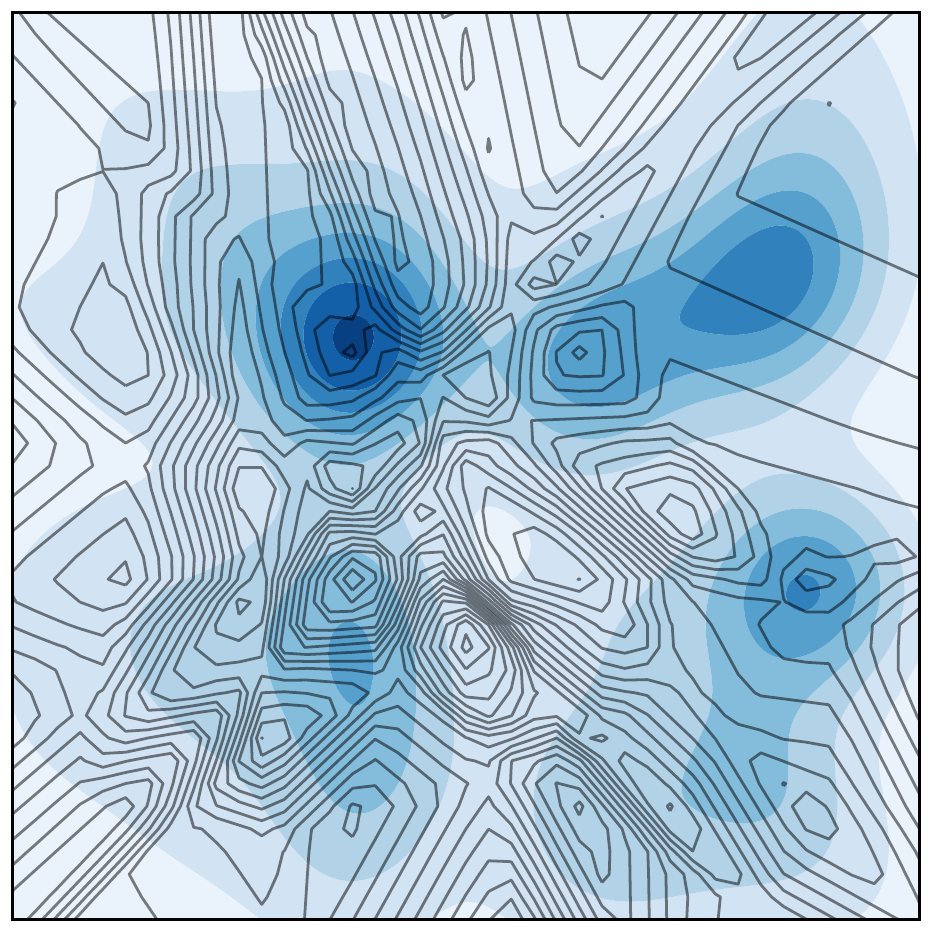}\includegraphics[width=0.17\textwidth]{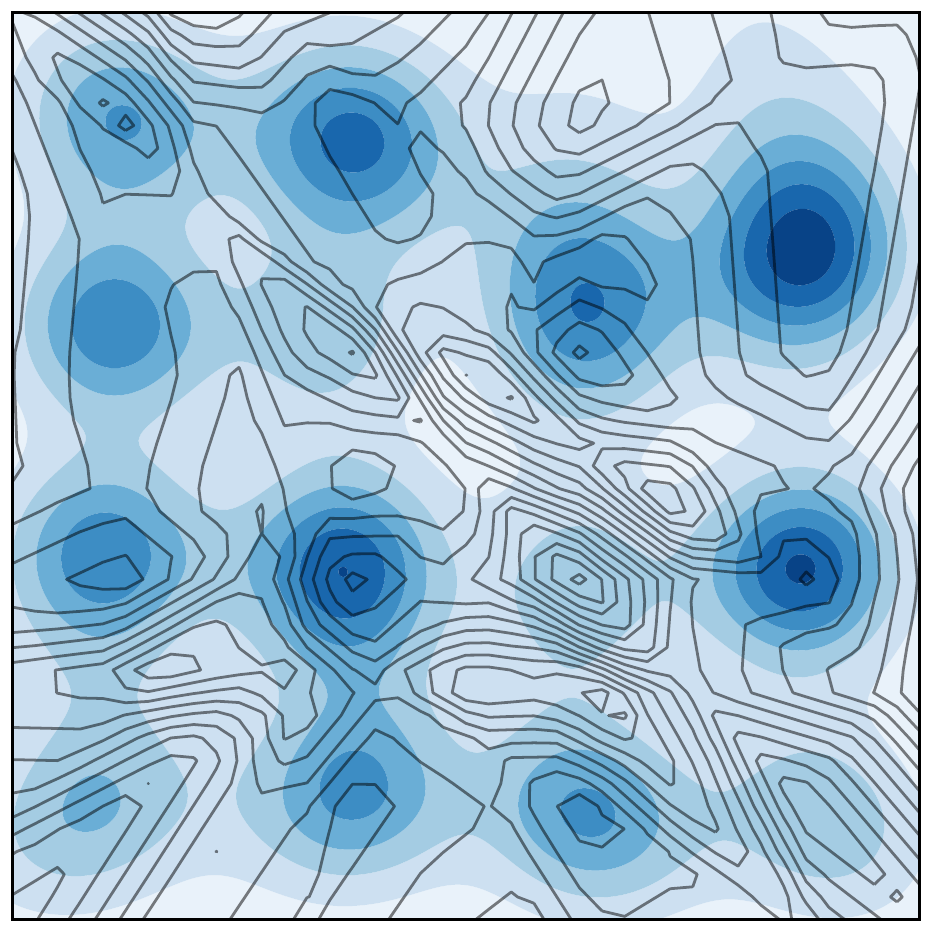}\includegraphics[width=0.26\textwidth]{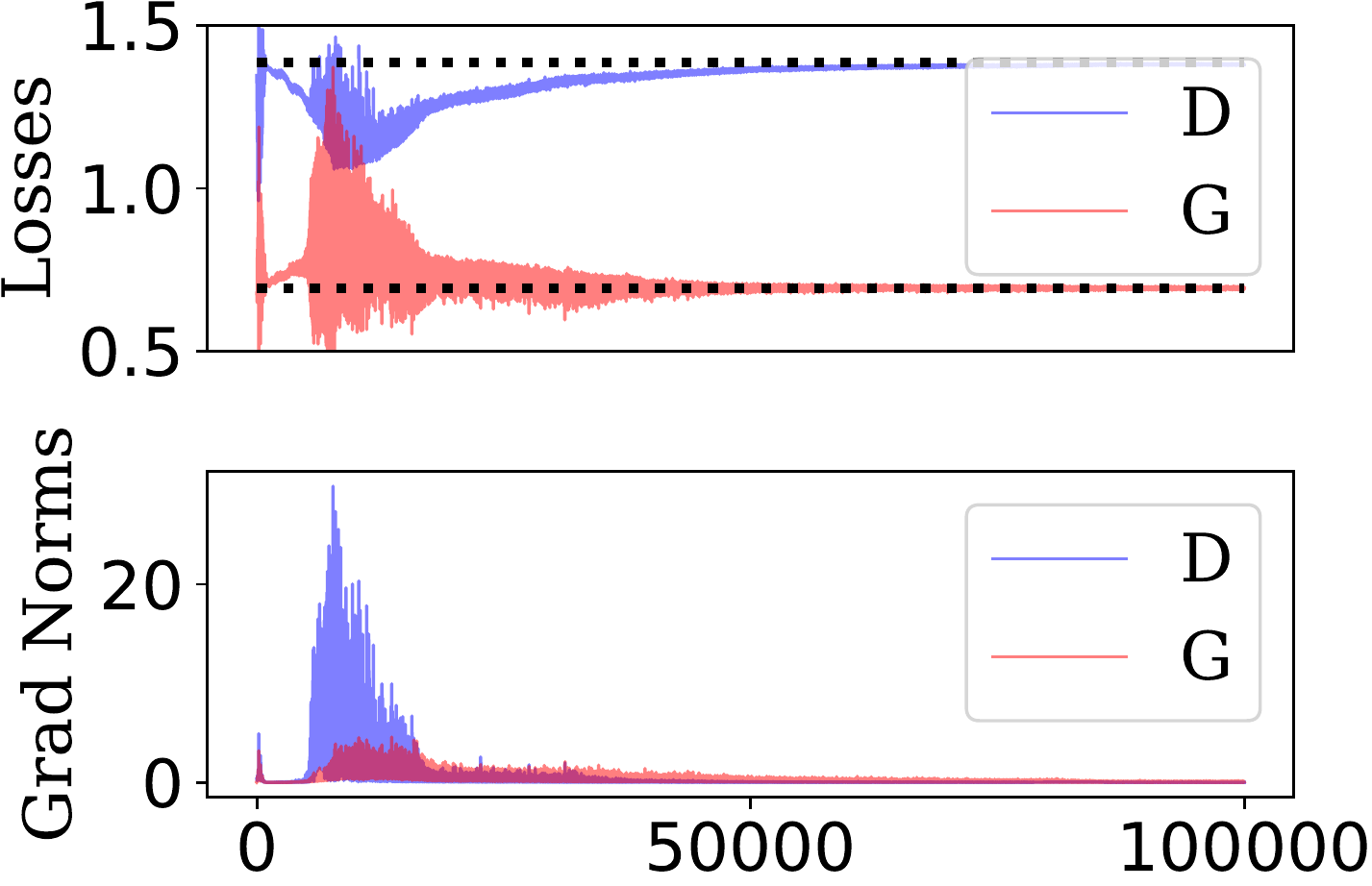}\\\includegraphics[width=0.17\textwidth]{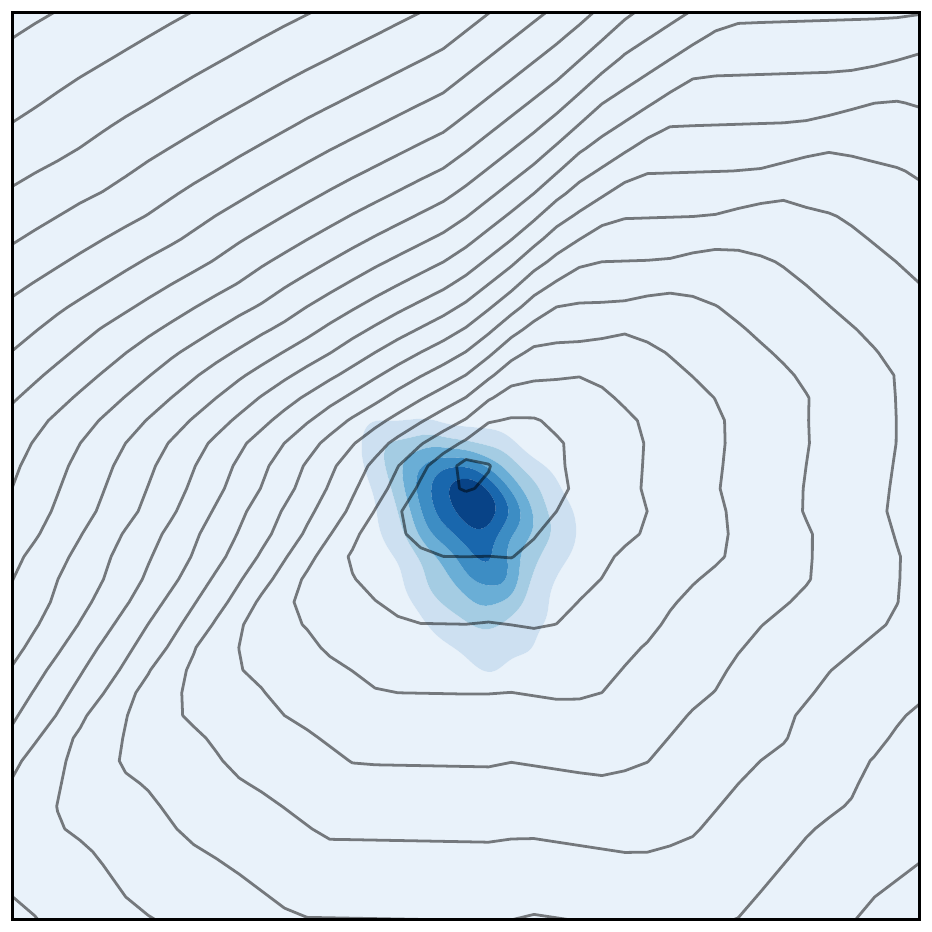}\includegraphics[width=0.17\textwidth]{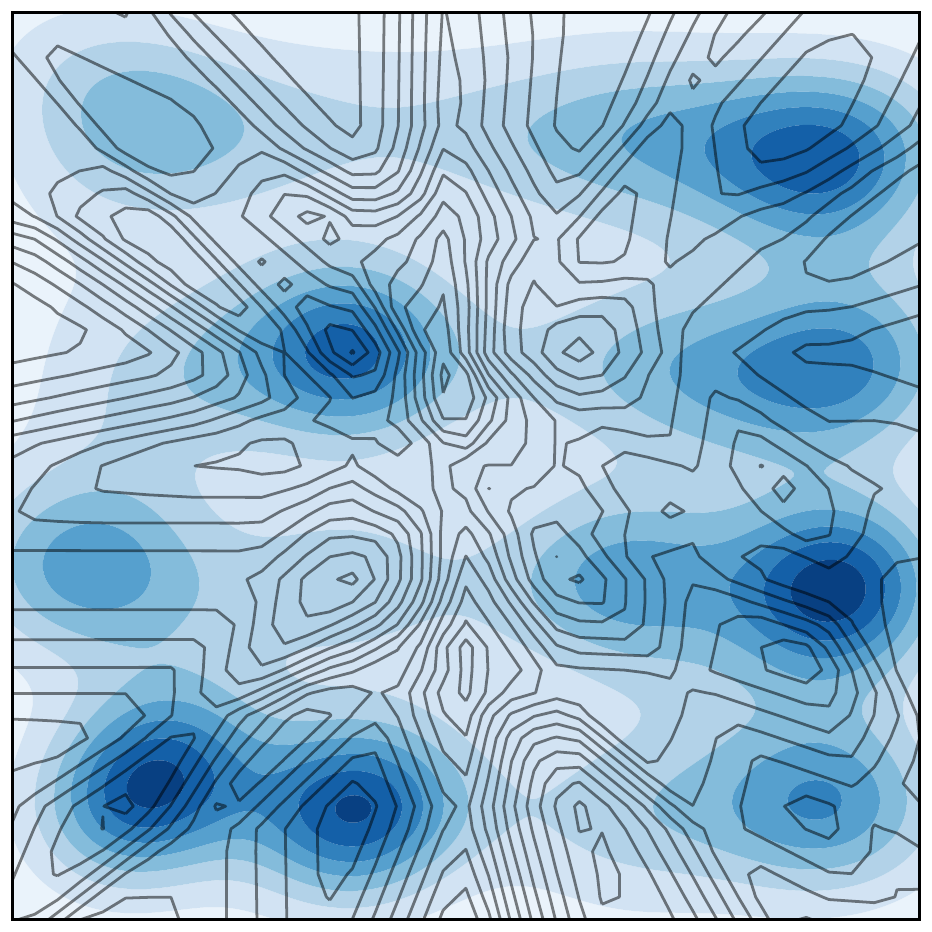}\includegraphics[width=0.17\textwidth]{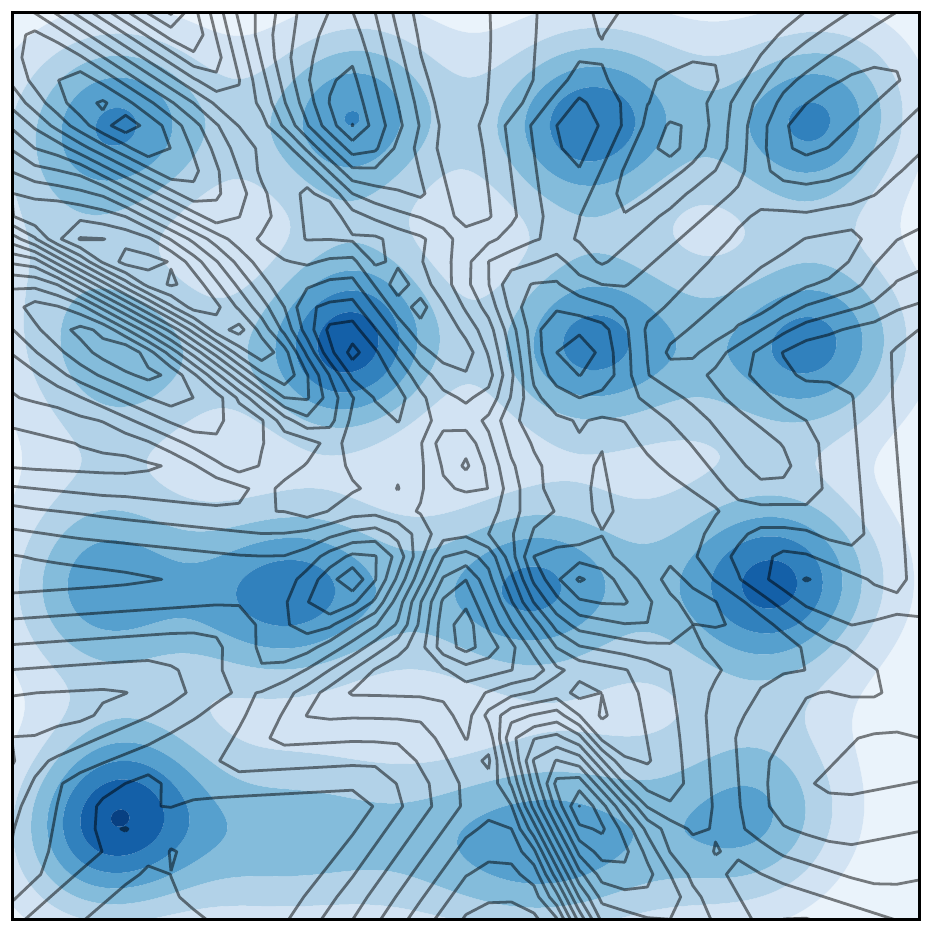}\includegraphics[width=0.17\textwidth]{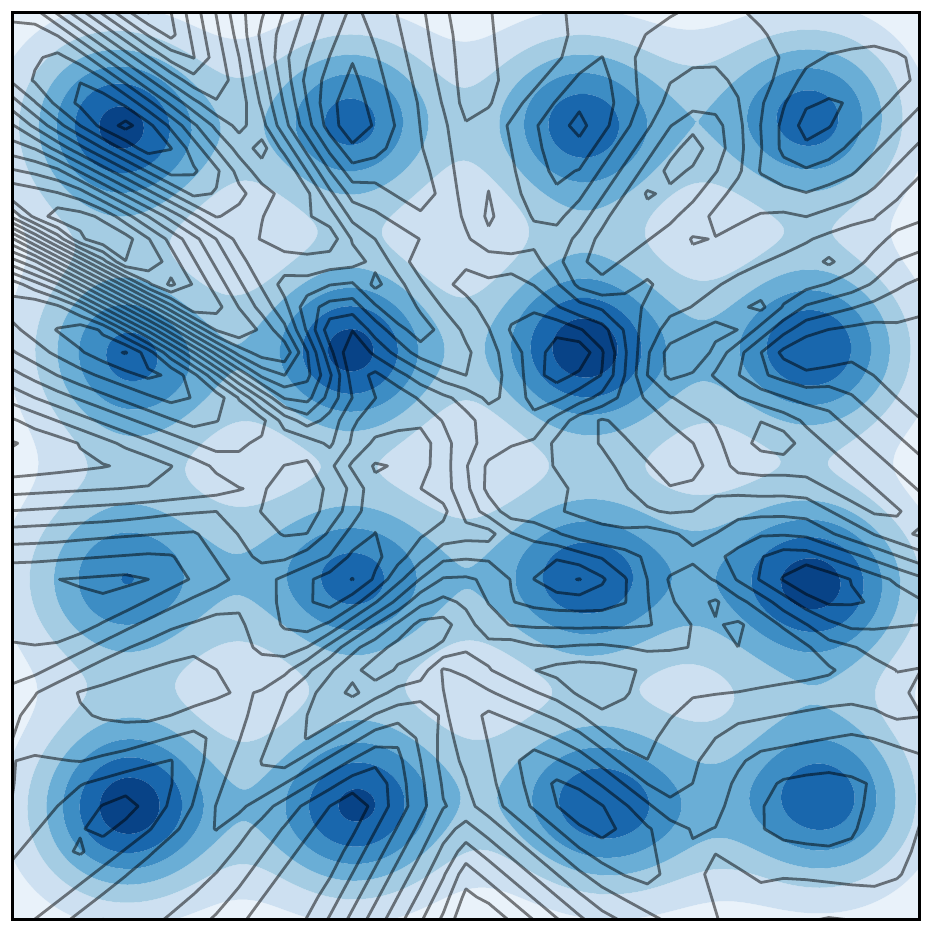}\includegraphics[width=0.26\textwidth]{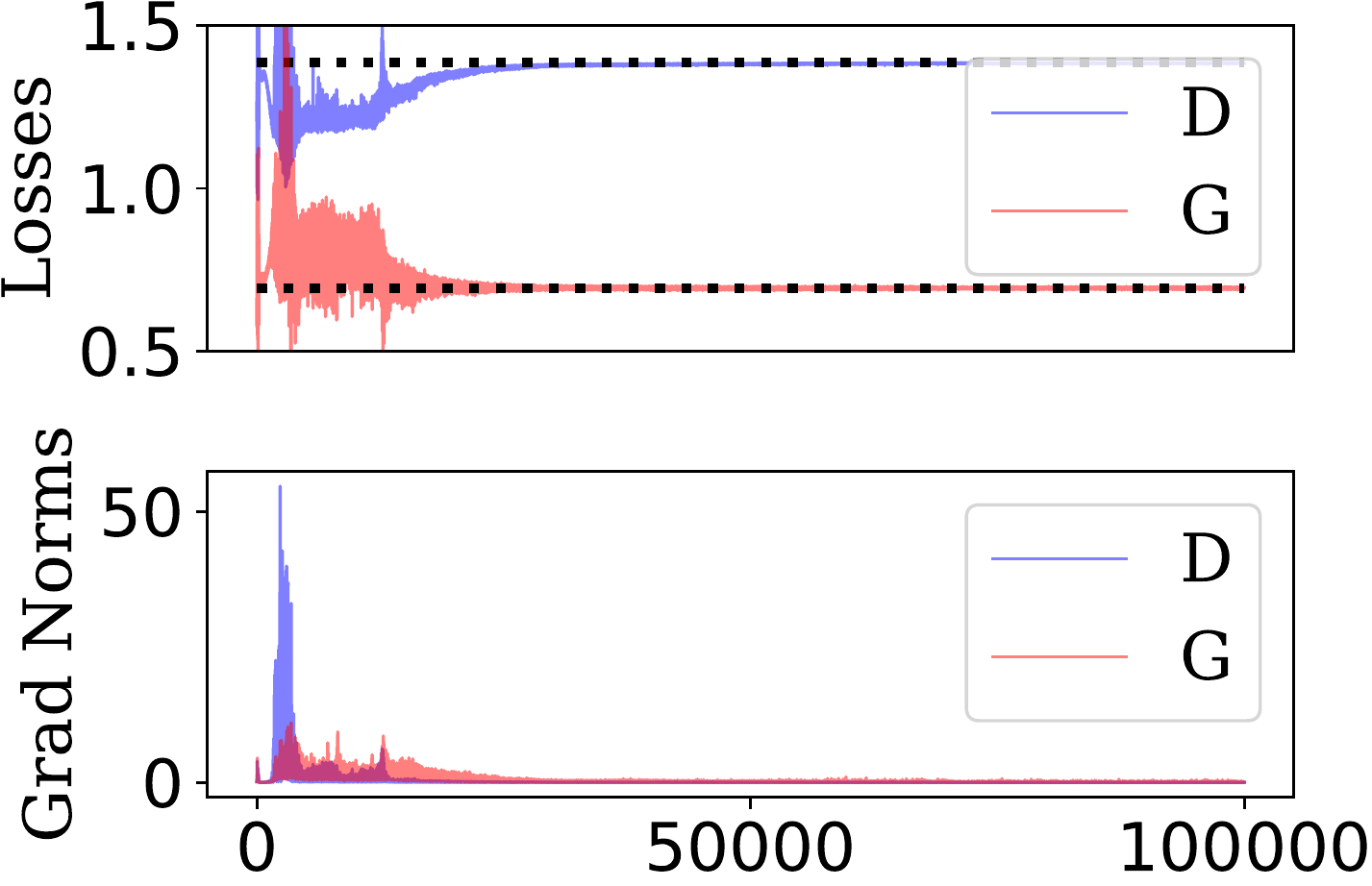}  
    \caption{Using Euler (top row) and RK4 (second row) with $\lambda=0.07$, $h=0.03$. We show the progression at 0, 6k, 12k and 18k steps. The black contours show the discriminator function, blue heat maps indicate the generator density. The right figure shows the losses and the norms of the gradients. The dashed lines indicate the Nash equilibrium ($\log(4)$ for discriminator, $\log(2)$ for generator).}
    \label{fig:rk4_mo16}\vspace{-0.5cm}
\end{figure}
We compare training with different integrators for a mixture of Gaussians (Fig.~\ref{fig:rk4_mo16}). We use a two layer MLP (25 units) with ReLU activations, latent dimension of 32 and batch size of 512. 
This low dimensional problem is solvable both with Euler's method as well as Runge-Kutta with gradient regularisation -- but without regularisation gradients grow large and integration is harmed, see Appendix~\ref{sec:reg_appendix}. 
In Fig.~\ref{fig:rk4_mo16}, we see that both the discriminator and generator converge to the Nash payoff value as shown by their corresponding losses; at the same time, all the modes were recovered by both players. As expected, the convergence rate using Euler's method is slower than RK4.

\vspace{-0.1cm}
\subsection{CIFAR-10 and ImageNet}
\vspace{-0.1cm}
We use 50K samples to evaluate IS/FID. Unless otherwise specified, we use the DCGAN from~\citep{radford2015unsupervised} for the CIFAR-10 dataset and ResNet-based GANs \cite{gulrajani2017improved} for ImageNet. 

\vspace{-0.2cm}
\subsubsection{Different Orders of ODE Solvers}
\begin{wrapfigure}{l}{0.55\textwidth}
    \centering
    \vspace{-0.5cm}
    \includegraphics[width=0.50\textwidth]{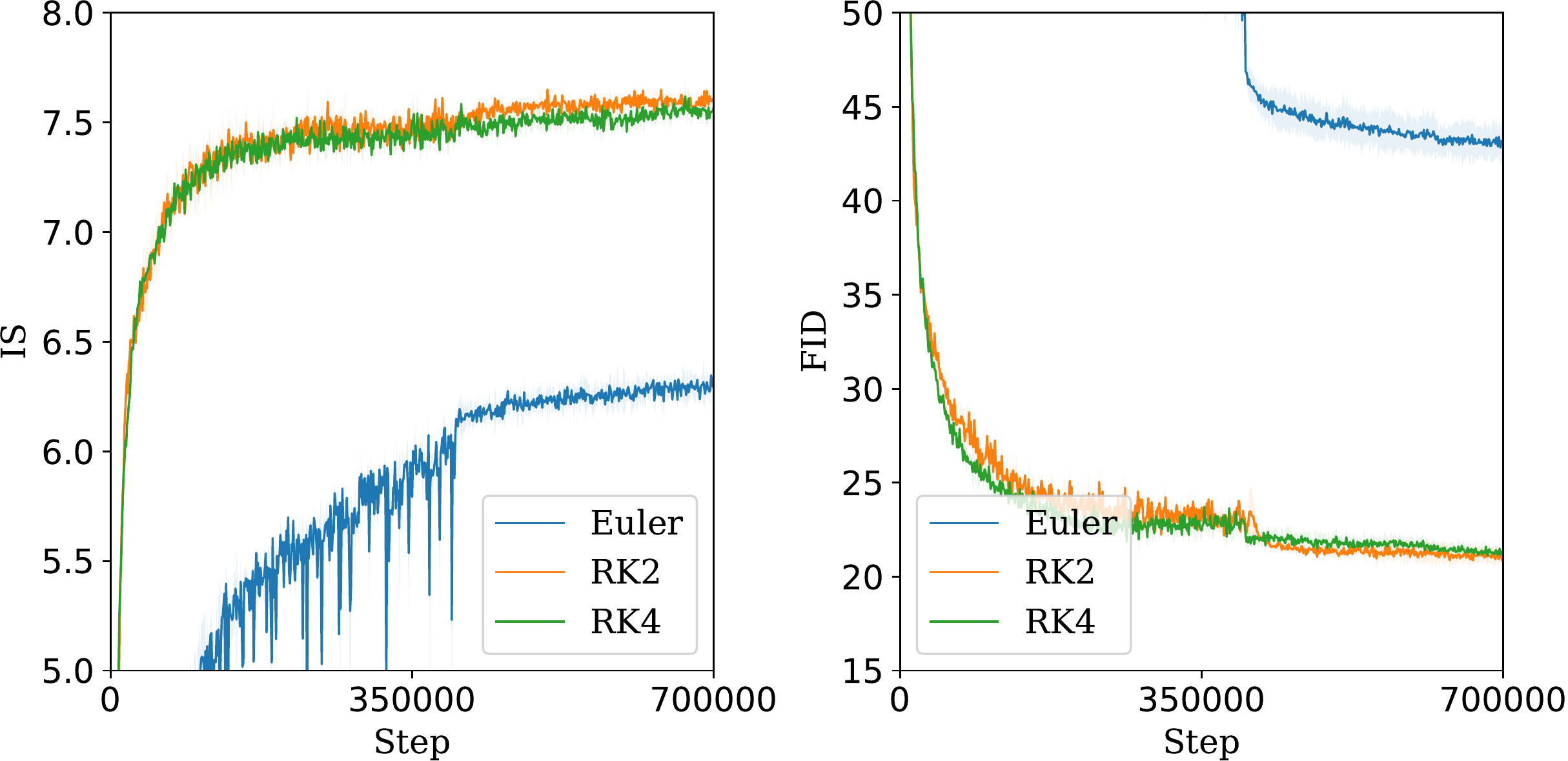}
    \caption{Comparison between different orders of integrators using $\lambda=0.002$ and $h=0.02$.}
    \label{fig:integrator_order}
    \vspace{-1.cm}
\end{wrapfigure}
As shown in Fig.~\ref{fig:integrator_order}, we find that moving from a first order to a second order integrator can significantly improve training convergence. But when we go past second order we see diminishing returns. We further observe that higher order methods allow for much larger step sizes: Euler's method becomes unstable with $h\geq 0.04$ while Heun's method (RK2) and RK4 do not. On the other hand, if we increase the regularisation weight, the performance gap between Euler and RK2 is reduced (results for higher regularisation $\lambda=0.01$ tabulated in Table~\ref{tab:ablation} in the appendix). This hints at an implicit effect of the regulariser on the truncation error of the integrator. 

\subsubsection{Effects of Gradient Regularisation}
The regulariser controls the truncation error by penalising large gradient magnitudes (see Appendix~\ref{sec:grad_trunc}). We illustrate this with an embedded method (Fehlberg method, comparing errors between 3rd and 2nd order), which tracks the integration error over the course of training. We observe that larger $\lambda$ leads to smaller error. For example, $\lambda = 0.04$ gives an average error of $0.0003$, $\lambda = 0.01$ yields $0.0015$ with RK4 (see appendix Fig.~\ref{fig:appendix_reg_trunc}).
We depict the effects of using different $\lambda$ values in Fig.~\ref{fig:reg_weight}.

\begin{figure}[t] 
\centering
\begin{minipage}{0.49\textwidth}
\includegraphics[width=0.98\textwidth]{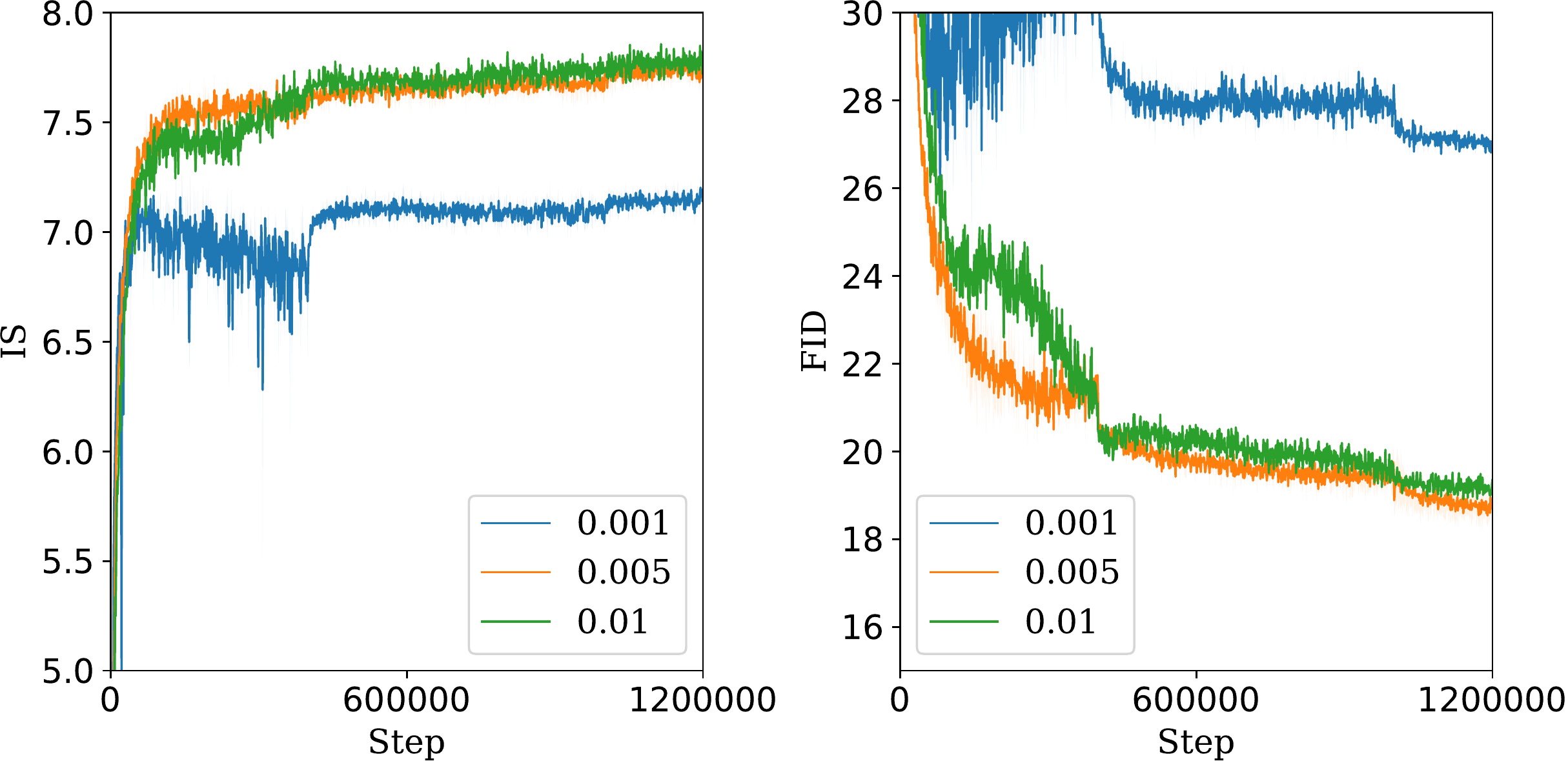}
\caption{Comparison of runs with different regularisation weight $\lambda$ shown in legend. The step size used is $h = 0.04$, all with RK4 integrator.}
\label{fig:reg_weight}
\end{minipage}
\hspace{0.1cm}
\begin{minipage}{0.49\textwidth}
\includegraphics[width=0.98\textwidth]{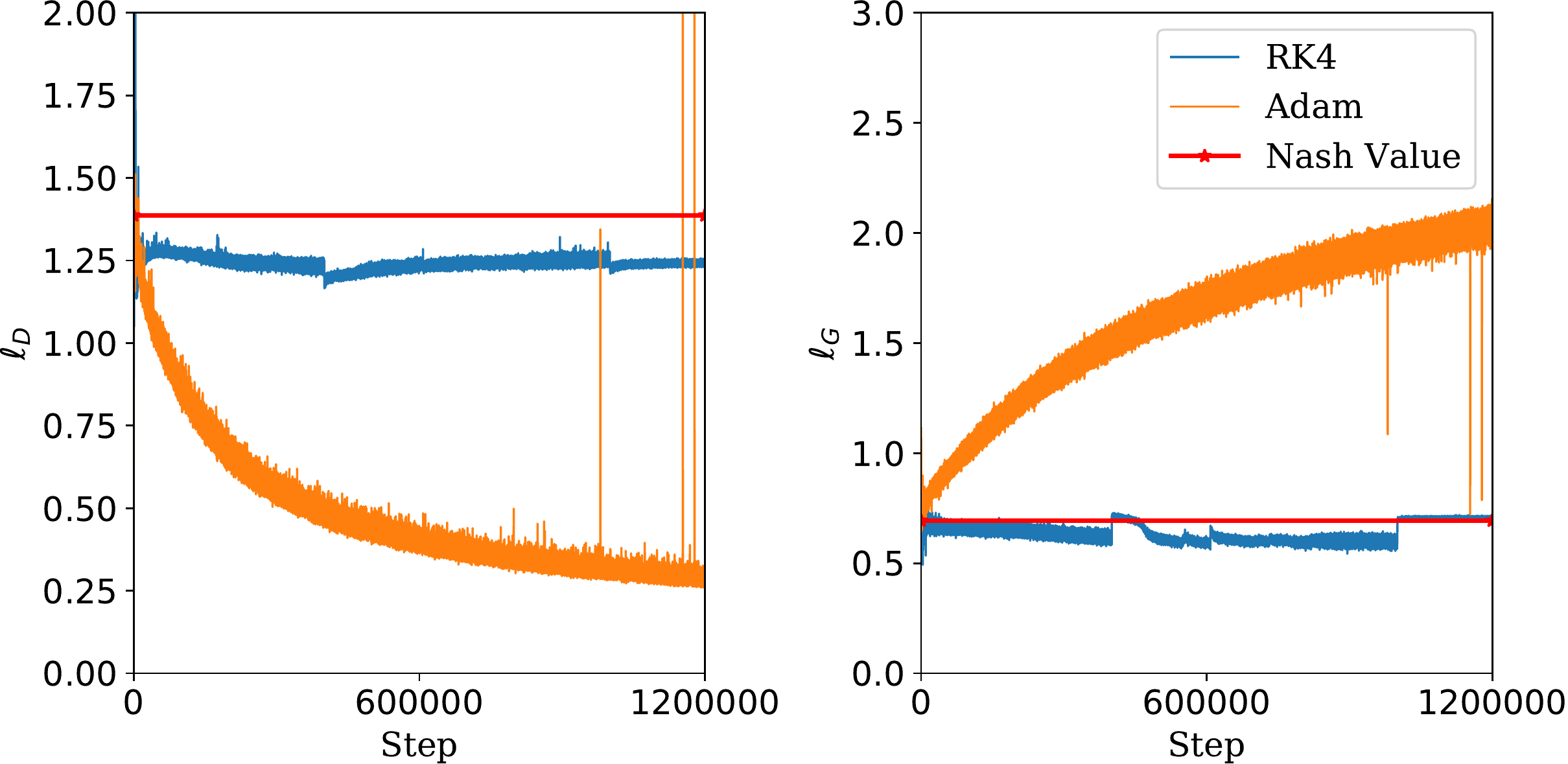}
    \caption{The evolution of loss values for discriminator (left) and generator (right) for ODE-GAN (RK4) versus training with Adam optimisation.}
    \label{fig:loss_profile}
\end{minipage}
\end{figure}

\subsubsection{Loss Profiles for the Discriminator and Generator}
Our experiments reveal that using more accurate ODE solvers results in loss profiles that differ significantly to curves observed in standard GAN training, as shown in Fig.~\ref{fig:loss_profile}. Strikingly, we find that the discriminator loss and the generator loss stay very close to the values of a Nash equilibrium, which are $\log(4)$ for the discriminator and $\log(2)$ for the generator (shown by red lines in Fig.~\ref{fig:loss_profile}, see \citep{goodfellow2014generative}).
In contrast, the discriminator dominates the game when using the Adam optimiser, evidenced by a continuously decreasing discriminator loss, while the generator loss increases during training. This imbalance correlates with the well-known phenomenon of worsening FID and IS in late stages of training (we show this in Table~\ref{tab:ablation}, see also e.g. \citet{arjovsky2017towards}).

\subsubsection{Comparison to Standard GAN training}
\begin{figure}[b]
    \centering
    \begin{minipage}{0.49\textwidth}
    \includegraphics[width=0.98\textwidth]{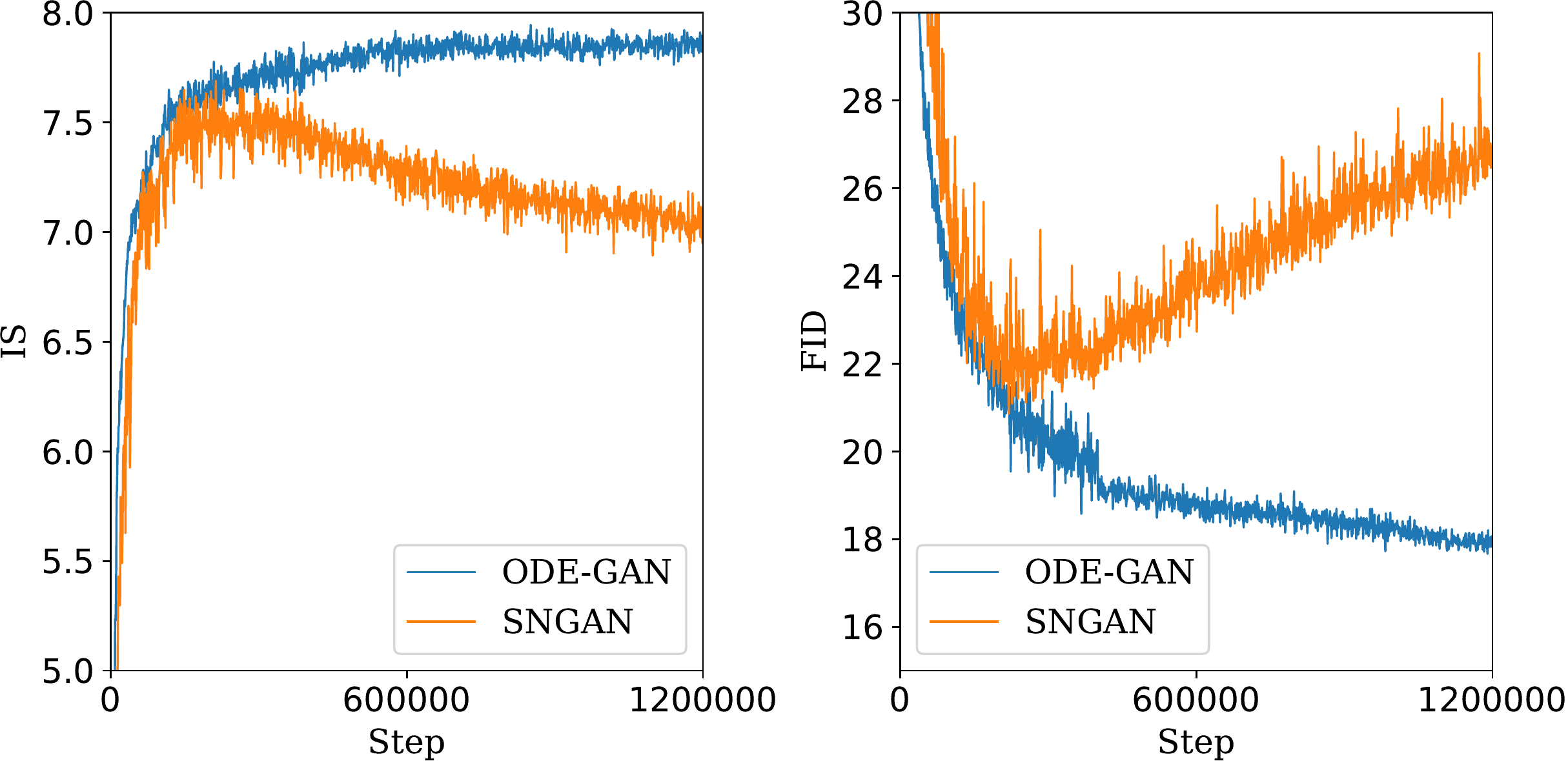}
    \caption{Here we compare ODE-GAN (with RK4 as $\mathtt{ODEStep}$ and $\lambda=0.01$) to SN-GAN.} \label{fig:baseline_compare}\end{minipage}\hspace{0.2cm}\begin{minipage}{0.49\textwidth}
    \includegraphics[width=0.98\textwidth]{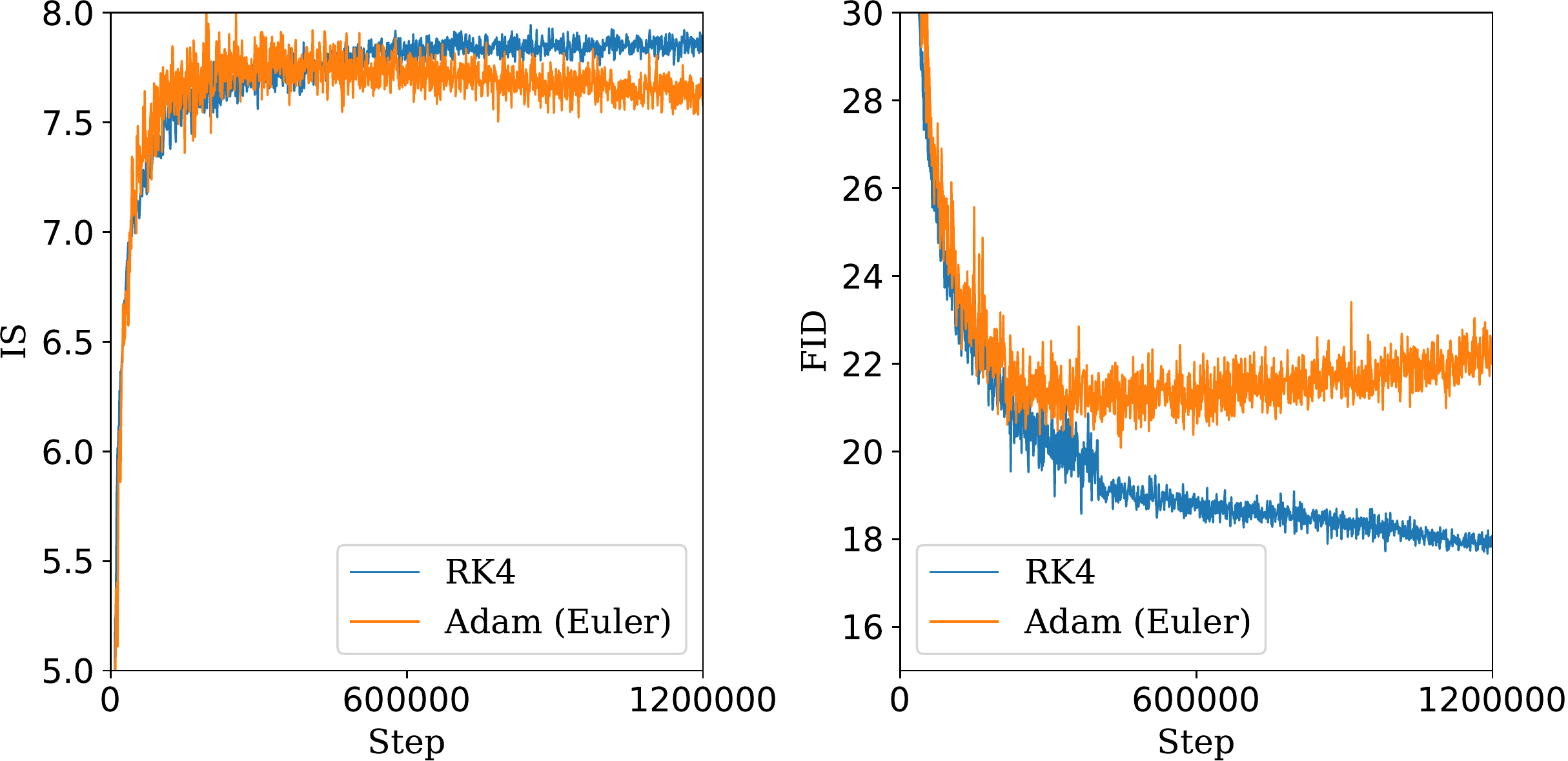}
    \caption{Here we compare using the convergence of RK4 to using Adam, $\lambda = 0.01$ for both.}
    \label{fig:adam_compare}\end{minipage}
\end{figure}
This section compares using ODE solvers versus standard methods for GAN training. Our results challenge the widely-held view that adaptive optimisers are necessary for training GANs, as revealed in Fig.~\ref{fig:adam_compare}. Moreover, the often observed degrading performance towards the end of training disappears with improved integration. To our knowledge, this is the first time that competitive results for GAN training have been demonstrated for image generation without adaptive optimisers. We also compare ODE-GAN with SN-GAN in Fig.~\ref{fig:baseline_compare} (re-trained and tuned using our code for fair comparison) and find that ODE-GAN can improve significantly upon SN-GAN for both IS and FID.
Comparisons to more baselines are listed in Table~\ref{tab:baseline_comparison}. For the DCGAN architecture, ODE-GAN (RK4) achieves 17.66 FID as well as 7.97 IS, note that these best scores are remarkably close to scores we observe at the end of training.

ODE solvers and adaptive optimisers can also be combined. We considered this via the following approach (listed as ODE-GAN(RK4+Adam) in tables): we use the adaptive learning rates computed by Adam to scale the gradients used in the ODE solver. Table~\ref{tab:baseline_comparison} shows that this combination reaches similar best IS/FID, but then deteriorates. This observation suggests that Adam can efficiently accelerate training, but the convergence properties may be lost due to the modified gradients -- analysing these interactions further is an interesting avenue for future work. For a more detailed comparison of RK4 and Adam, see Table~\ref{tab:ablation} in the appendix.

In Tables~\ref{tab:baseline_comparison} and \ref{tab:imagenet_large}, we present results to test ODE-GANs at a larger scale. For CIFAR-10, we experiment with the ResNet architecture from~\citet{gulrajani2017improved} and report the results for baselines using the same model architecture. ODE-GAN achieves 11.85 in FID and 8.61 in IS. Consistent with the behavior we see in DCGAN, ODE-GAN (RK4) results in stable performance throughout training. Additionally, we trained a conditional model on ImageNet 128 $\times$ 128 with the ResNet used in SNGAN without Spectral Normalisation (for further details see Appendix~\ref{sec:setup} and \ref{sec:imagenet_setup}). ODE-GAN achieves 26.16 in FID and 38.71 in IS. We have also trained a larger ResNet on ImageNet (see Appendix~\ref{sec:setup}), where we can obtain 22.29 for FID and 46.17 for IS using ODE-GAN (see Table~\ref{tab:imagenet_large}). Consistent with all previous experiments, we find that the performance is stable and does not degrade over the course of training: see Fig.~\ref{fig:imagenet} and Tables~\ref{tab:baseline_comparison} and \ref{tab:imagenet_large}. 

\begin{table}[t]
    \centering
      \caption{ 
      Numbers taken from the literature are cited. $\ddagger$ denotes reproduction in our code. ``Best'' and ``final'' indicate the best scores and scores at the end of 1 million update steps respectively. The means and standard deviations (shown by $\pm$) are computed from 3 runs with different random seeds. We use {\bf bold face} for the best scores across each category incl. those within one standard deviation. 
      }\vspace{0.1cm}
    \begin{tabular}{c || c c}
    {\bf Method} & {\bf FID} (best) / {\bf FID}(final) & {\bf IS} (best) / {\bf IS}(final) \\ \hline
    \multicolumn{3}{c}{\bf{CIFAR-10 Unconditional}} \\\hline
    {\bf -- DCGAN --} & & \\
    ODE-GAN(RK4) & \bf 17.66 $\pm$ 0.38 ~/~ 18.05 $\pm$ 0.53  &  \bf{7.97 $\pm$ 0.03} ~/~ \bf{7.86 $\pm$ 0.09}  \\
    ODE-GAN(RK4+Adam) & {\bf 17.47 $\pm$ 0.30} ~/~ \underline{23.20 $\pm$ 0.95}  & {\bf 8.00 $\pm$ 0.06} ~/~ \underline {7.59 $\pm$ 0.14}\\ 
    SN-GAN$\ddagger$ & 21.71 $\pm$ 0.61 ~/~ 26.16 $\pm$ 0.27 & 7.60 $\pm$ 0.06 ~/~ 7.02 $\pm$ 0.02 \\
    \hline
    {\bf -- ResNet --} & & \\
    ODE-GAN (RK4) & \bf 11.85 $\pm$ 0.21 ~/~ 12.50 $\pm$ 0.30 & \bf 8.61 $\pm$ 0.06 ~/~ \bf 8.51 $\pm$ 0.01 \\
    ODE-GAN (RK4 + Adam) & 12.11 $\pm$ 0.28 ~/~ 20.32 $\pm$ 1.17  & $8.23 \pm 0.04$  ~/~ 7.92 $\pm$ 0.03 \\
    SN-GAN$\ddagger$ & $15.97 \pm 0.22$  ~/~ 23.98 $\pm$ 2.08 & $7.71 \pm 0.05$  ~/~ 7.20 $\pm$ 0.22\\
    WGAN-ALP (ResNet) \citep{Terjek2020Adversarial} & 12.96 $\pm$ 0.35 ~/~ ---  &  8.56 ~/~ --- \\
    \hline
    {\bf -- Evaluated with 5k samples --} & & \\
    SN-GAN (DCGAN) \citep{miyato2018spectral}  & 29.3 ~/~ --- & 7.42 $\pm$ 0.08 ~/~ --- \\
    WGAN-GP (DCGAN) \citep{gulrajani2017improved}  & 40.2 ~/~ --- & 6.68 $\pm$ 0.06 ~/~ --- \\
    SN-GAN (ResNet) \citep{miyato2018spectral}  & 21.7 $\pm$ 0.21 ~/~ --- & 8.22 $\pm$ 0.05  ~/~ ---\\ \hline
    \multicolumn{3}{c}{\bf{ImageNet $128 \times 128$ Conditional} }\\\hline
    {\bf -- ResNet --} & & \\
    ODE-GAN (RK4) & \bf 26.16 $\pm$ 0.75 ~/~ 28.42 $\pm$ 1.46 & \bf 38.71 $\pm$ 0.82 ~/~ 36.54 $\pm$ 1.53\\
    SN-GAN$\ddagger$ & $37.05\pm 0.26$ ~/~ $41.07\pm 0.46$ & $31.52 \pm 0.25$ ~/~ $29.16 \pm 0.20$
    \end{tabular}\label{tab:baseline_comparison}
    \vspace{-0.3cm}
\end{table}
\section{Discussion and Relation to Existing Work}
Our work explores higher-order approximations of the continuous dynamics induced by GAN training. We show that improved convergence and stability can be achieved by faithfully following the vector field of the adversarial game -- without the necessity for more involved techniques to stabilise training, such as those considered in \citet{salimans2016improved,balduzzi2018mechanics,gulrajani2017improved,miyato2018spectral}. Our empirical results thus support the hypothesis that, at least locally, the GAN game is not inherently unstable. Rather, \emph{the discretisation of GANs' continuous dynamics, yielding inaccurate time integration, causes instability.} On the empirical side, we demonstrated for training on CIFAR-10 and ImageNet that both Adam \citep{kingma:adam} and spectral normalisation \citep{miyato2018spectral}, two of the most popular techniques, may harm convergence, and that they are not necessary when higher-order ODE solvers are available. 

The dynamical systems perspective has been employed for analysing GANs in previous works \citep{Vaishnavh2017,wang2019solvable, mescheder2017numerics, balduzzi2018mechanics, gemp:18}. They mainly consider simultaneous gradient descent to analyse the discretised dynamics. In contrast, we study the link between GANs and their underlying continuous time dynamics which prompts us to use higher-order integrators in our experiments. Others made related connections: for example, using a second order ODE integrator was also considered in a simple 1-D case for GANs in \citet{gemp:18}, and \citet{Vaishnavh2017} also analysed the continuous dynamics in a more restrictive setting -- in a min-max game around the optimal solution. We hope that our paper can encourage more work in the direction of this connection \citep{wang2019solvable}, and adds to the valuable body of work on analysing GAN training convergence \citep{Wang*2020On,fiez2019convergence, mescheder2018training}. 

Lastly, it is worth noting that viewing traditionally discrete systems through the lens of continuous dynamics has recently attracted attention in other parts of machine learning. For example, the Neural ODE \citep{chen2018neural} interprets the layered processing of residual neural networks \citep{he2016deep} as Euler integration of a continuous system. 
Similarly, we hope our work can contribute towards establishing a bridge for utilising
tools from dynamical systems for generative modelling.

\section*{Broader Impact}
This work offers a perspective on training generative adversarial networks through the lens of solving an ordinary differential equation. As such it helps us connect an important part of current studies in machine learning (generative modelling) to an old and well studied field of research (integration of dynamical systems).

Making this connection more rigorous over time could help us understand how to better model natural phenomena, see e.g. \citet{zoufal2019quantum} and \citet{casert2020optical} for recent steps in this direction. Further, tools developed for the analysis of dynamical systems could potentially help reveal in what form exploitable patterns exist in the models we are developing -- or their dynamics -- and as a result contribute to the goal of learning robust and fair representations \citep{gowal2019achieving}.

The techniques proposed in this paper make training of GAN models more stable. This may result in making it easier for non-experts to train such models for beneficial applications like creating realistic images or audio for assistive technologies (e.g. for the speech-impaired, or technology for restoration of historic text sources). On the other hand, the technique could also be used to train models used for nefarious applications, such as forging images and videos (often colloquially referred to as ``DeepFakes''). There are some research projects to find ways to mitigate this issue, one example is the DeepFakes Detection Challenge~\citep{dolhansky2019deepfake}. 
\begin{ack}
We would especially like to thank David Balduzzi for insightful initial discussions and Ian Gemp for careful reading of our paper and feedback on the work. 
\end{ack}

\small
\bibliographystyle{plainnat}
\bibliography{references}

\newpage
\section*{Supplementary}
We present details on the proofs from the main paper in Sections A-C. We include analysis on the effects of regularisation on the truncation error (Section D). Update rules for the ODE solvers considered in the main paper are presented in Section E.The connections between our method and Consensus optimisation, SGA and extragradient are reported in Section F. Further details of experiments/additional experimental results are in Section G-H. Image samples are shown in Section I.
\appendix
\section{Real Part of the Eigenvalues are Positive}

\begin{definition}
The strategy $(\theta^*, \phi^*)$ is a differential Nash equilibrium if at this point the first order derivatives and second order derivatives satisfy $\partial \ell_D/\partial \theta = \partial \ell_G/\partial \phi = 0$ and $\partial^2 \ell_D/\partial \theta^2 \succ 0$, $\partial^2 \ell_G/\partial \phi^2 \succ 0$ (see \citet{ratliff2016characterization}).
\end{definition}

\begin{lemma}\label{lem:invertible}
For a matrix of form 
\[
H =\begin{pmatrix}
A & B^T \\
-B & C\end{pmatrix}.
\] Given $B$ is full rank, if either $A\succ 0$ and $C \succeq 0$ or $A\succeq 0$ and $C \succ 0$, $H$ is invertible.
\end{lemma}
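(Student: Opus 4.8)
The plan is to show directly that $\ker H = \{\mathbf{0}\}$, which for a square matrix is equivalent to invertibility. Take any $v = (x, y)^T$ with $Hv = 0$; writing this block-wise gives the two equations $Ax + B^T y = 0$ and $-Bx + Cy = 0$.

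The key step is to exploit the skew arrangement of the off-diagonal blocks. Computing the (in general non-symmetric) quadratic form $v^T H v = x^T A x + x^T B^T y - y^T B x + y^T C y$ and using that $x^T B^T y = y^T B x$ for scalars, the two cross terms cancel, leaving $v^T H v = x^T A x + y^T C y$. Since $Hv = 0$ we get $x^T A x + y^T C y = 0$; and since under either hypothesis both $A$ and $C$ are at least positive semidefinite, each summand is nonnegative, hence $x^T A x = 0$ and $y^T C y = 0$ separately.

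Now I split on which block is strictly definite. If $A \succ 0$, then $x^T A x = 0$ forces $x = 0$; substituting back into $Ax + B^T y = 0$ yields $B^T y = 0$, and since $B$ is full rank this forces $y = 0$, so $v = \mathbf{0}$. The case $C \succ 0$ is symmetric: $y^T C y = 0$ gives $y = 0$, then $-Bx + Cy = 0$ gives $Bx = 0$, and full rank of $B$ gives $x = 0$. In both cases $\ker H$ is trivial, so $H$ is invertible.

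The only delicate point is the bookkeeping around the rank hypothesis: one must check that ``full rank'' of $B$ is applied in the correct direction in each case, i.e.\ that $B^T y = 0 \Rightarrow y = 0$ under $A \succ 0$ and $Bx = 0 \Rightarrow x = 0$ under $C \succ 0$ (so that the relevant null space is trivial); for square $B$ this is immediate, and otherwise it is exactly the content of the assumption. Everything else is routine linear algebra, and crucially no spectral information about $H$ itself is used here --- the statement about the real parts of the eigenvalues is deferred to Lemma~\ref{lem:positive}, which will build on this invertibility.
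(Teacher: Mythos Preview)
Your argument is correct and proceeds by a genuinely different route than the paper. The paper factors $H$ via the Schur complement: assuming $A \succ 0$, it writes
\[
H = \begin{pmatrix} I & 0 \\ -BA^{-1} & I \end{pmatrix}
\begin{pmatrix} A & 0 \\ 0 & C + BA^{-1}B^T \end{pmatrix}
\begin{pmatrix} I & A^{-1}B^T \\ 0 & I \end{pmatrix},
\]
notes that the outer triangular factors are unit-triangular and that the middle block-diagonal factor has invertible blocks (using $C + BA^{-1}B^T \succ 0$), and concludes. Your kernel argument via the quadratic form $v^T H v = x^T A x + y^T C y$ is more elementary --- no inversion of $A$ or block factorisation is needed --- and it directly anticipates the computation in Lemma~\ref{lem:positive}, where exactly the same cancellation of the cross terms is used to show $\mathrm{Re}\,\lambda \geq 0$. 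So your route has the advantage of unifying the two lemmas around a single identity.

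One small point on your closing caveat about rectangular $B$: ``full rank'' for a non-square $B$ yields injectivity of only one of $B$ or $B^T$ (whichever maps out of the smaller space), not both, so for a fixed shape of $B$ only one of your two cases actually closes. This is not a defect of your argument relative to the paper's: the Schur-complement proof needs $\ker B^T = \{0\}$ to get $C + BA^{-1}B^T \succ 0$ and thus carries the identical hidden assumption. It is a wrinkle in the lemma's hypotheses rather than in either proof.
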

\begin{proof}
 Let's assume that $A\succ 0$, then we note $C + BA^{-1}B^T \succ 0$. Thus by definition $A$ and $C + BA^{-1}B^T$ are invertible as there are no zero eigenvalues. The Schur complement decomposition of $H$ is given by
\[
H = \begin{pmatrix}
I & 0 \\
-B A^{-1} & I \end{pmatrix} \begin{pmatrix}
A & 0 \\
0&  C + BA^{-1}B^T \end{pmatrix} 
\begin{pmatrix}
I & A^{-1} B^T\\
0&  I \end{pmatrix} 
\]
Each matrix in this is invertible, thus $H$ is invertible.
\end{proof}
\begin{lemma}\label{lem:positive}
For a matrix of form 
\[
\begin{pmatrix}
A & B^T \\
-B & C\end{pmatrix},
\] if either $A\succ 0$ and $C \succeq 0$ or vice versa, and $B$ is full rank, the positive part of the eigenvalues of $H$ is strictly positive.
\end{lemma}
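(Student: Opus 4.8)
\emph{Proof strategy.} The plan is to take an arbitrary (possibly complex) eigenpair $(\lambda, z)$ of $H$, write $\lambda = u + iv$ and partition $z = (z_1, z_2) \neq 0$ conformally with the blocks, and extract a lower bound on $u = \mathrm{Re}(\lambda)$ from the symmetric part of $H$. From $Hz = \lambda z$ we get $\lambda = z^* H z / \|z\|^2$, hence $u = \mathrm{Re}(z^*Hz)/\|z\|^2 = \tfrac{1}{2\|z\|^2}\, z^*(H + H^T) z$. The key structural point is that the skew blocks cancel: since $A$ and $C$ are symmetric, $H + H^T = \mathrm{diag}(2A,\, 2C)$, so $u = \big(z_1^* A z_1 + z_2^* C z_2\big)/\|z\|^2$. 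Because in either branch of the hypothesis we have $A \succeq 0$ and $C \succeq 0$, each of $z_1^* A z_1$ and $z_2^* C z_2$ is real and nonnegative, giving $u \geq 0$ immediately. This is where the ``rotational'' part $\pm B$ drops out of the calculation entirely.

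It then remains to rule out $u = 0$. Suppose $u = 0$; then both $z_1^* A z_1 = 0$ and $z_2^* C z_2 = 0$. Treat the case $A \succ 0$, $C \succeq 0$ (the case $A \succeq 0$, $C \succ 0$ is symmetric, swapping the blocks). Positive definiteness of $A$ forces $z_1 = 0$, and since $z \neq 0$ we have $z_2 \neq 0$. Substituting $z_1 = 0$ into $Hz = \lambda z$ yields $B^T z_2 = 0$ and $C z_2 = \lambda z_2$. Since $C$ is real symmetric, its eigenvalue $\lambda$ is real, so $\lambda = u = 0$ and hence also $C z_2 = 0$. But then $(0, z_2)^T$ is a nonzero vector in $\ker H$, contradicting Lemma~\ref{lem:invertible}, whose hypotheses ($B$ full rank and the definiteness assumption) are exactly those in force here. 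In the symmetric case one gets $z_2 = 0$, then $A z_1 = \lambda z_1$ and $B z_1 = 0$, reality of the spectrum of $A$ gives $\lambda = 0$, and $(z_1, 0)^T \in \ker H$ again contradicts invertibility. Therefore $u > 0$ for every eigenvalue, which is the assertion.

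The two load-bearing observations are (i) the cancellation of the skew part in $H + H^T$ — this is what makes the off-diagonal rotational blocks irrelevant to the real part of the spectrum — and (ii) the use of Lemma~\ref{lem:invertible} to exclude the boundary case $u = 0$, which is the only place the full-rank hypothesis on $B$ enters (and it enters only through that lemma, never directly, so no assumption on the shape of $B$ beyond what Lemma~\ref{lem:invertible} already needs is required). I do not expect a genuine obstacle; the only place demanding a little care is the bookkeeping with complex eigenvectors, namely verifying that $z_1^* A z_1$ and $z_2^* C z_2$ are truly real and nonnegative, which uses symmetry together with positive semidefiniteness of $A$ and $C$.
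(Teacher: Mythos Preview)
Your proposal is correct and follows essentially the same approach as the paper: both compute $\mathrm{Re}(\lambda)$ via the symmetric part of $H$, observe that the off-diagonal $\pm B$ blocks cancel so only $z_1^*Az_1 + z_2^*Cz_2$ survives, and then rule out $\mathrm{Re}(\lambda)=0$ by forcing the eigenvector into the form $(0,z_2)$ (resp.\ $(z_1,0)$), reading off $B^Tz_2=0$ and $Cz_2=\lambda z_2$, and invoking Lemma~\ref{lem:invertible} for the contradiction. The only cosmetic difference is that the paper splits the eigenvector into real and imaginary parts $\mathbf{u}_r + i\mathbf{u}_c$ and manipulates the two real equations separately, whereas you work directly with the Hermitian form $z^*Hz$; your route is slightly cleaner but the content is identical.
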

\begin{proof}
We assume $A \succ 0$ and $C \succeq 0$. The proof for  $A \succeq 0$ and $C \succ 0$ is analogous.

To see that the real part of the eigenvalues are positive, we first note that the matrix satisfies the following:
\[
H= \begin{pmatrix}
A & B^T \\
-B & C\end{pmatrix} \Rightarrow \mathbf{x}^T H \mathbf{x} \geq 0.
\]
To see this more explicitly, note
\[
[\mathbf{x}^T, \mathbf{y}^T] ~H ~[\mathbf{x}, \mathbf{y}]= [\mathbf{x}^T, \mathbf{y}^T]~ \begin{pmatrix}
A & B^T \\
-B & C\end{pmatrix}~[\mathbf{x}, \mathbf{y}]= \mathbf{x}^T A \mathbf{x} + \mathbf{y}^T C \mathbf{y} \geq 0 ~\forall\mathbf{x} \in \mathbb{R}^{N},\mathbf{y} \in \mathbb{R}^{M},
\]
where $N$ and $M$ are the dimensions of $\D$ and $\G$ respectively. From this, we can derive the following property about the eigenvalue $\lambda = \alpha + i \beta$ with corresponding eigenvector $\mathbf{v} = \mathbf{u}_r + i \mathbf{u}_c$.
\begin{align}
(H - \lambda)\mathbf{v} = 0 \\
\Rightarrow (H - \alpha - i \beta)(\mathbf{u}_r + i \mathbf{u}_c) = 0 \label{eq:zero_subspace}
\end{align}
Thus the following conditions hold:
\begin{align}
(H - \alpha) \mathbf{u}_r + \beta \mathbf{u}_c =0 \\
(H - \alpha) \mathbf{u}_c - \beta \mathbf{u}_r =0.
\end{align}
Thus we can multiply the top equation by $\mathbf{u}_r^T$ and the bottom equation by $\mathbf{u}_c^T$ to retrieve the following:
\[
\alpha = \frac{\mathbf{u}_r^T H \mathbf{u}_r + \mathbf{u}_c^T H \mathbf{u}_c}{\mathbf{u}_r^T\mathbf{u}_r + \mathbf{u}_c^T \mathbf{u}_c} \geq 0
\]
To see that this is strictly above zero, we note that we can split the eigenvector via the following:
\[
\mathbf{u}_r + i \mathbf{u}_c = \begin{pmatrix}\mathbf{u}^0_r + i \mathbf{u}^0_c \\ \mathbf{u}^1_r + i \mathbf{u}^1_c\end{pmatrix}.
\]
Thus now $\alpha$ can be rewritten as the following:
\[
\alpha = \frac{\left(\mathbf{u}^0_{r}\right)^T A \mathbf{u}^0_r + \left(\mathbf{u}^1_r\right)^T C \mathbf{u}^1_r + \left(\mathbf{u}^0_c\right)^T A \mathbf{u}^0_c + \left(\mathbf{u}^1_c\right)^T C \mathbf{u}^1_c }{\mathbf{u}_r^T\mathbf{u}_r + \mathbf{u}_c^T \mathbf{u}_c} 
\]
Since $A \succ 0$, for this to be zero we note that the following must hold
\[
\mathbf{u}_r + i \mathbf{u}_c = \begin{pmatrix}\mathbf{0} \\ \mathbf{u}^1_r + i \mathbf{u}^1_c\end{pmatrix}.
\]
If this is the form of an eigenvector of $H$ then we get the following:
\[
\begin{pmatrix}
A & B^T \\
-B & C\end{pmatrix}\begin{pmatrix}\mathbf{0} \\ \mathbf{u}\end{pmatrix}  = \begin{pmatrix}B^T \mathbf{u} \\ C\mathbf{u}\end{pmatrix} = \lambda \begin{pmatrix}\mathbf{0} \\ \mathbf{u}\end{pmatrix}
\]
where $\mathbf{u} = \mathbf{u}^1_r + i \mathbf{u}^1_c$. One condition which is needed is that $B^T \mathbf{u} = \mathbf{0}$.
Another condition needed for this to be an eigenvector is that $\lambda$ is an eigenvalue of $C$. However this would mean that the eigenvalue $\lambda$ is real. If this eigenvalue was thus at $0$, the matrix $H$ would not be invertible. Thus concluding the proof.
\end{proof}
\paragraph{Linear Analysis:}The last part of the proof in Lemma~\ref{lemma:local_nash} can be seen by using linear dynamical systems analysis. Namely when we have the dynamics given by $\dot{x} = A x$ where $x \in \mathbb{R}^n$ and $A \in \mathbb{R}^n \times \mathbb{R}^n$ is a diagonalisable matrix with eigenvalues $\lambda_1, \cdots \lambda_n$. Then we can decompose $x(t) = \sum_i \alpha_i(t) v_i$ where $v_i$ is the corresponding eigenvector to eigenvalue $\lambda_i$. The solution can be derived as 
$x(t) = \sum_i \alpha_i(0) e^{\lambda_i t}v_i$.

\section{Off-Diagonal Elements are Opposites at the Nash}\label{sec:opposites}
Here we show that the off diagonal elements of the Hessian with respect to the Wasserstein, cross-entropy and non-saturating loss are opposites. For the cross-entropy and Wasserstein losses, this property hold for zero-sum games by definition. Thus we only show this for the non-saturating loss.

For the non-saturating loss, the objectives for the discriminator and the generator is given by the following:
\begin{align}
\ell_D(\D, \G) &= - \int_x p_d(x) \log(D(x;\D)) \mathrm{d}x +\int_z p(z)\log(1- D(G(z;\G); \D)) \mathrm{d}z \\
\ell_G(\D, \G) &= - \int_z p(z) \log(D(G(z; \G);\D) \mathrm{d}z.
\end{align}
We transform this with $x = G(z; \G)$ where $x$ is now drawn from the probability distribution $p_g(x; \G)$. We can rewrite this loss as the following:
\begin{align}
\ell_D(\D, \G) &= - \int_x (p_d(x) \log(D(x;\D)) + p_g(x; \G)\log(1- D(x; \D)) \mathrm{d}x \\
\ell_G(\D, \G) &= - \int_x p_g(z; \G) \log(D(x;\D)) \mathrm{d}x.
\end{align}
Note that $\frac{\partial^2\ell_{D}}{\partial \G\partial \D} =- \frac{\partial^2\ell_{G}}{\partial \D\partial \G}^T$ if and only if the following condition is true \[
\frac{\partial^2 (\ell_{D} + \ell_{G})}{\partial \D\partial \G} = 0.
\]
For the non-saturating loss this becomes the following:
\begin{align}
\frac{\partial^2 (\ell_{D} + \ell_{G})}{\partial \D\partial \G} = - \int_x \frac{\partial D(x; \D) }{\partial \D}\frac{\partial p_g(x; \G) }{\partial \G} \left(\frac{1}{D(x; \D)} - \frac{1}{1 - D(x; \D)}\right) \mathrm{d}x
\end{align}
At the global Nash, we know that $D(x) = p_d(x) /(p_d(x) + p_g(x; \G))$ and $p_g(x; \G) = p_d(x)$, thus this is identically zero.

\section{Hessian with Respect to the Discrimator's Parameters is Semi-Positive Definite for Piecewise-Linear Activation Functions}\label{sec:piecewise}

In this section, we show that when we use piecewise-linear activation functions such as ReLU or LeakyReLUs in the case of the \emph{cross-entropy loss} or the \emph{non saturating loss} (as they are the same loss for the discriminator), the Hessian wrt. the discriminator's parameters will be semi-positive definite. Here we also make the assumption that we are never at the point where the piece-wise function switches state (as the curvature is not defined at these points). Thus we note 
\[
\frac{\partial^2 D}{\partial \D^2} = 0.
\]

For the cross-entropy loss and non-saturating losses, the discriminator network often outputs the logit, $D(x; \D)$, for a sigmoid function $\sigma(x) = 1/(1 + e^{-x})$. The Hessian with respect to the parameters is given by:
\begin{align}
    \ell_D(\D, \G) &= - \int_x p_D(x) \log(\sigma(D(x; \D))) + p_G(x) \log(1- \sigma(D(x; \D))) \mathrm{d}x \nonumber\\
    \frac{\partial \ell_D}{\partial \D} &= - \int_x \left(p_D(x) (1- \sigma(D(x; \D)))  -p_G(x) \sigma(D(x; \D))\right) \frac{\partial D(x; \D)}{\partial \D} \mathrm{d}x \nonumber\\
 \Rightarrow   \frac{\partial^2 \ell_D}{\partial \D^2} & = \int_x \left(p_D(x) + p_G(x)\right) \sigma(D(x; \D)) (1- \sigma(D(x; \D)))  \frac{\partial D(x; \D)}{\partial \D}\frac{\partial D(x; \D)}{\partial \D}^T \mathrm{d}x \succeq 0.
\end{align}

\section{Gradient Norm Regularisation and Effects on Integration Error}\label{sec:grad_trunc}
Here we provide intuition for why gradient regularisation after applying the $\mathtt{ODEStep}$ might be needed. We start by considering how we can approximate the truncation error of Euler's method with stepsize $h$. 

The update rule of Euler is given by 
\[
y_{k+1} = y_k + h \mathbf{v}(y_k).
\]
Note here $y_k= (\D_k, \G_k)$. The truncation error is approximated by comparing this update to that when we half the step size and take two update steps:
\begin{align}
\tilde{y}_{k+1} &= y_k + \frac{h}{2} \mathbf{v}(y_k) + \frac{h}{2} \mathbf{v}\left(y_k + \frac{h}{2} \mathbf{v}(y_k)\right) \\
& = y_k + h \mathbf{v}(y_k)  + \frac{h^2}{4} \mathbf{v}(y_k)\frac{\partial \mathbf{v}}{\partial y}(y_k) + O(h^3). \\
& = y_{k+1}  + \frac{h^2}{4} \mathbf{v}(y_k)\frac{\partial \mathbf{v}}{\partial y}(y_k) + O(h^3) \\
\Rightarrow \tau_{k+1} &= \tilde{y}_{k+1} - y_{k+1} \sim \frac{h^2}{4} \mathbf{v}(y_k)\frac{\partial \mathbf{v}}{\partial y}(y_k) = O(|\mathbf{v}|).
\end{align}
Here we see that the truncation error is linear with respect to the magnitude of the gradient. Thus we need the magnitude of the gradient to be bounded in order for the truncation error to be small.

\section{Numerical Integration Update Steps}\label{sec:update_steps}
\paragraph{Euler's Method:}
\begin{equation}
    \begin{pmatrix}
    \D_{k+1} \\
    \G_{k+1}
    \end{pmatrix} = \begin{pmatrix}
    \D_{k} \\
    \G_{k}
    \end{pmatrix} + h \mathbf{v}(\D_{k}, \G_{k})
\end{equation}
\paragraph{Heun's Method (RK2):}
\begin{align}
   & \begin{pmatrix}
    \tilde{\D}_{k} \\
    \tilde{\G}_{k}
    \end{pmatrix} = \begin{pmatrix}
    \D_{k} \\
    \G_{k}
    \end{pmatrix} + h\mathbf{v}(\D_{k}, \G_{k}) \nonumber\\
   & \begin{pmatrix}
    \D_{k+1} \\
    \G_{k+1}
    \end{pmatrix} = \begin{pmatrix}
    \D_{k} \\
    \G_{k}
    \end{pmatrix} + \frac{h}{2}\left(\mathbf{v}(\D_{k}, \G_{k}) + \mathbf{v}(\tilde{\D}_{k}, \tilde{\G}_{k})\right)
\end{align}
\paragraph{Runge Kutta 4 (RK4):}
\begin{align}
   & \mathbf{v}_1 = \mathbf{v}(\D_{k}, \G_{k}) \nonumber\\
   & \mathbf{v}_2= \mathbf{v}\left(\D_{k} + \frac{h}{2}(\mathbf{v}_1)_\D, \G_{k}+ \frac{h}{2}(\mathbf{v}_1)_\G\right) \nonumber\\
   & \mathbf{v}_3= \mathbf{v}\left(\D_{k} + \frac{h}{2}(\mathbf{v}_2)_\D, \G_{k}+ \frac{h}{2}(\mathbf{v}_2)_\G\right) \nonumber\\
   & \mathbf{v}_4= \mathbf{v}(\D_{k} + h(\mathbf{v}_3)_\D, \G_{k}+ h(\mathbf{v}_3)_\G) \nonumber\\
   & \begin{pmatrix}
    \D_{k+1} \\
    \G_{k+1}
    \end{pmatrix} = \begin{pmatrix}
    \D_{k} \\
    \G_{k}
    \end{pmatrix} + \frac{h}{6}\left(\mathbf{v}_1 + 2\mathbf{v}_2 + 2 \mathbf{v}_3 + \mathbf{v}_4 \right).
\end{align}
Note $\mathbf{v}_\D$ corresponds to the vector field element corresponding to $\D$, similarly for $\mathbf{v}_\G$. 
\section{Existing Methods which Approximate Second-order ODE Solvers}\label{sec:approximates}
Here we show that previous methods such as consensus optimisation~\citep{mescheder2017numerics}, SGA~\citep{balduzzi2018mechanics} or Crossing-the-Curl~\citep{gemp:18}, and extragradient methods~\citep{korpelevich1976extragradient, chavdarova2019reducing} approximates second-order ODE solvers\footnote{Note that methods such as consensus optimisation or SGA/Crossing-the-Curl are methods which address the rotational aspects of the gradient vector field.}.

To see this let's consider an ``second-order" ODE solver of the following form:
\begin{align}
   & \begin{pmatrix}
    \tilde{\D}_{k} \\
    \tilde{\G}_{k}
    \end{pmatrix} = \begin{pmatrix}
    \D_{k} \\
    \G_{k}
    \end{pmatrix} + \gamma \mathbf{v}(\D_{k}, \G_{k}) = \begin{pmatrix}
    \D_{k} \\
    \G_{k}
    \end{pmatrix} + \mathbf{h} \label{eq:mid_point}\\
  & \begin{pmatrix}
    \D_{k+1} \\
    \G_{k+1}
    \end{pmatrix} = \begin{pmatrix}
    \D_{k} \\
    \G_{k}
    \end{pmatrix} + \frac{h}{2}\left(a \mathbf{v}(\D_{k}, \G_{k}) + b \mathbf{v}(\tilde{\D}_{k}, \tilde{\G}_{k})\right)\label{eq:new_update}
\end{align}
where $\mathbf{h} = \gamma[\mathbf{v}_\D,  \mathbf{v}_\G]$ and $a, b$ scales of each term. 

\textbf{Extra-gradient: }Note that when $a=0$ and $b=2$ and $\gamma=h$, this is the extra-gradient method by definition \citep{korpelevich1976extragradient}.

\textbf{Consensus Optimisation: }Note that $a=b=1$ and $\gamma=h$, we get Heun's method (RK2). For consensus optimisation, we only need $a=b=1$. If we perform Taylor Expansion on Eq~\eqref{eq:new_update}:
\begin{align}
  \begin{pmatrix}
    \D_{k+1} \\
    \G_{k+1}
    \end{pmatrix} = \begin{pmatrix}
    \D_{k} \\
    \G_{k}
    \end{pmatrix} + h\mathbf{v}(\D_{k}, \G_{k}) + h\left( (\nabla \mathbf{v})^T \mathbf{h}  + O(|\mathbf{h}|^2) \right).  
\end{align}
With this, we see that this approximates consensus optimisation~\citep{mescheder2017numerics}. 

\textbf{SGA/Crossing the Curl: }To see we can approximate one part of 
SGA~\citep{balduzzi2018mechanics}/Crossing-the-Curl~\citep{gemp:18}, the update is now given by:
\begin{align}
  & \begin{pmatrix}
    \D_{k+1} \\
    \G_{k+1}
    \end{pmatrix} = \begin{pmatrix}
    \D_{k} \\
    \G_{k}
    \end{pmatrix} + \frac{h}{2}\left(\mathbf{v}(\D_{k}, \G_{k}) + \begin{pmatrix}\mathbf{v}_\D(\D_{k}, \tilde{\G}_{k}) \\
    \mathbf{v}_\G(\tilde{\D}_{k}, \G_{k})
    \end{pmatrix}
    \right)
\end{align}
where $\mathbf{v}_\D$ denotes the element of the vector field corresponding to $\D$ and similarly for $\mathbf{v}_\G$, and $\tilde{\D}_{k}, \tilde{\G}_{k}$ are given in Eq.~\eqref{eq:mid_point}. The Taylor expansion of this has only the off-diagonal block elements of the matrix $\nabla \mathbf{v}$. More explicitly this is written out as the following:
\begin{align}
  & \begin{pmatrix}
    \D_{k+1} \\
    \G_{k+1}
    \end{pmatrix} = \begin{pmatrix}
    \D_{k} \\
    \G_{k}
    \end{pmatrix} + \frac{h}{2} \begin{pmatrix}\mathbf{v}_\D + \gamma \mathbf{v}_\G \frac{\partial \mathbf{v}_\D}{\partial \G} \\
    \mathbf{v}_\G + \gamma \mathbf{v}_\D \frac{\partial \mathbf{v}_\G}{\partial \D} 
    \end{pmatrix} + O(h|\mathbf{h}|^2)
\end{align}
All algorithms are known to improve GAN's convergence, and we hypothesise that these effects are also related to improving numerical integration.

\section{Experimental Setup}\label{sec:setup}
We use two GAN architectures for image generation of CIFAR-10: the DCGAN \citep{radford2015unsupervised} modified by~\citet{miyato2018spectral} and the ResNet \citep{he2016deep} from \citet{gulrajani2017improved}, with additional parameters from \citep{miyato2018spectral}, but removed spectral-normalisation. For conditional ImageNet $128 \times 128$ generation we use a similar ResNet architecture from \citet{miyato2018spectral} with conditional batch-normalisation~\citep{dumoulin2017artistic} and projection~\citep{miyato2018cgans} but with spectral-normalisation removed. We also consider another ResNet where the size of the hidden layers is $2 \times$ the previous ResNet, we also increase the latent size from 128 to 256, we denote this as ResNet (large). 

\subsection{Hyperparameters for CIFAR-10 (DC-GAN)}
For Euler, RK2 and RK4 integration, we first set $h=0.01$ for 500 steps. Then we go to $h=0.04$ till 400k steps and then we decrease the learning rate by half. For Euler integration we found that $h=0.04$ will be unstable, so we use $h=0.02$. We train using batch size 64. The regularisation weight used is $\lambda=0.01$. For the Adam optimiser, we use $h_G = 1 \times 10^{-4}$ for the generator and $h_D = 2 \times 10^{-4}$ for the discriminator, with $\lambda = 0.1$.

\subsection{Hyperparameters for CIFAR-10 (ResNet)}
For RK4 first we set $h=0.005$ for the first 500 steps. Then we go to $h=0.01$ till 400k steps and then we decrease the learning rate by $0.5$. We train with batch size 64. The regularisation weight used is $\lambda=0.002$. For the Adam optimiser, we use $h = 2 \times 10^{-4}$ for both the generator and the discriminator, with $\lambda = 0.1$.

\subsection{Hyperparameters for ImageNet (ResNet)}
The same hyperparameters are used for ResNet and ResNet (large). For RK4 first we set $h=0.01$ for the first 15k steps, then we go to $h=0.02$. We train with batch size 256. The regularisation weight used is $\lambda=0.00002$.

\section{Additional Experiments}
Here we show experiments on ImageNet; ablation studies on using different orders of numerical integrators; effects of gradient regularisation; as well as experiments on combining the RK4 ODE solver with the Adam optimiser.
\subsection{Conditional ImageNet Generation}\label{sec:imagenet_setup}
\begin{figure}[htb]
    \centering
    \includegraphics[width=0.48\textwidth]{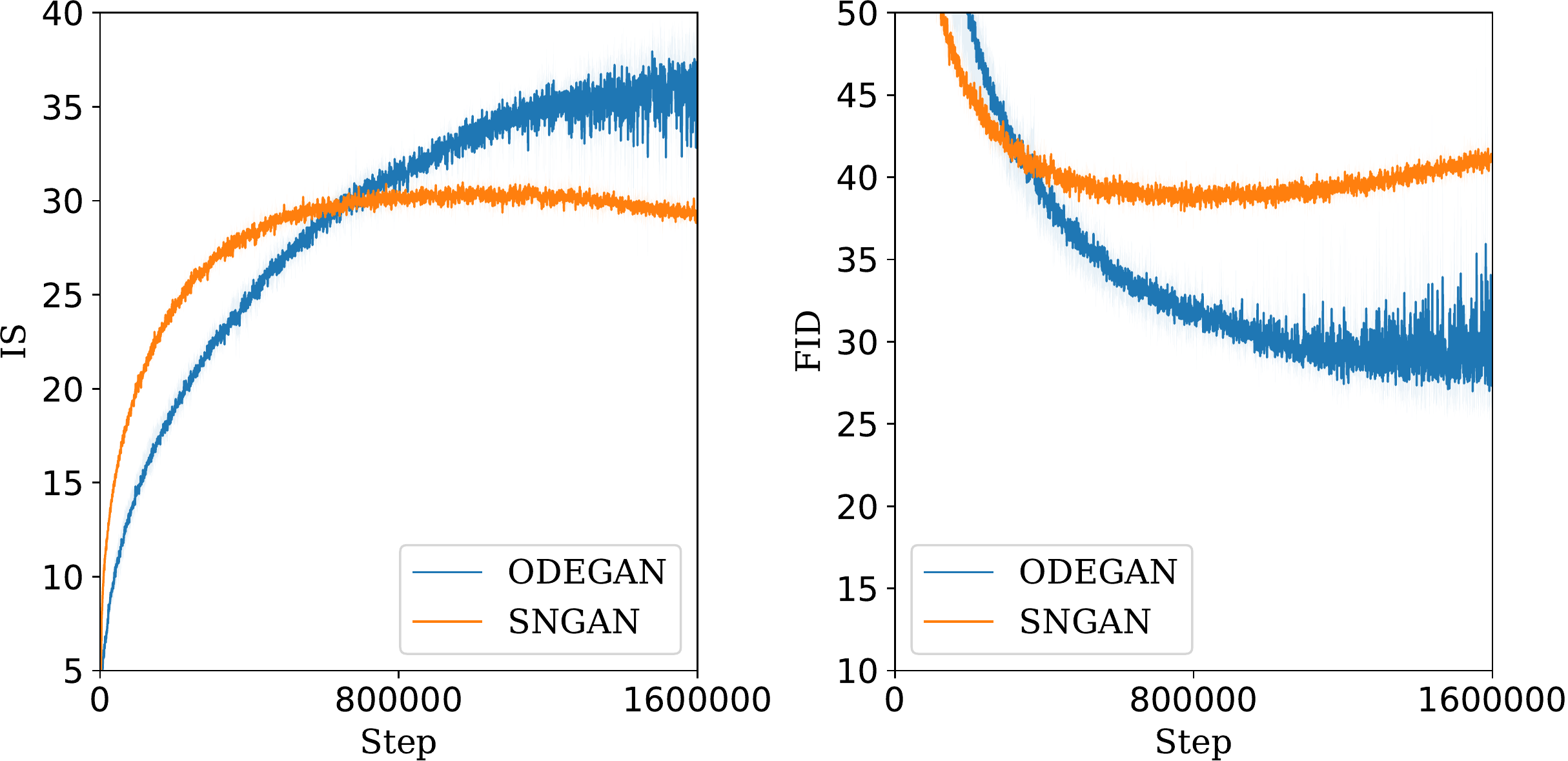}
    \includegraphics[width=0.48\textwidth]{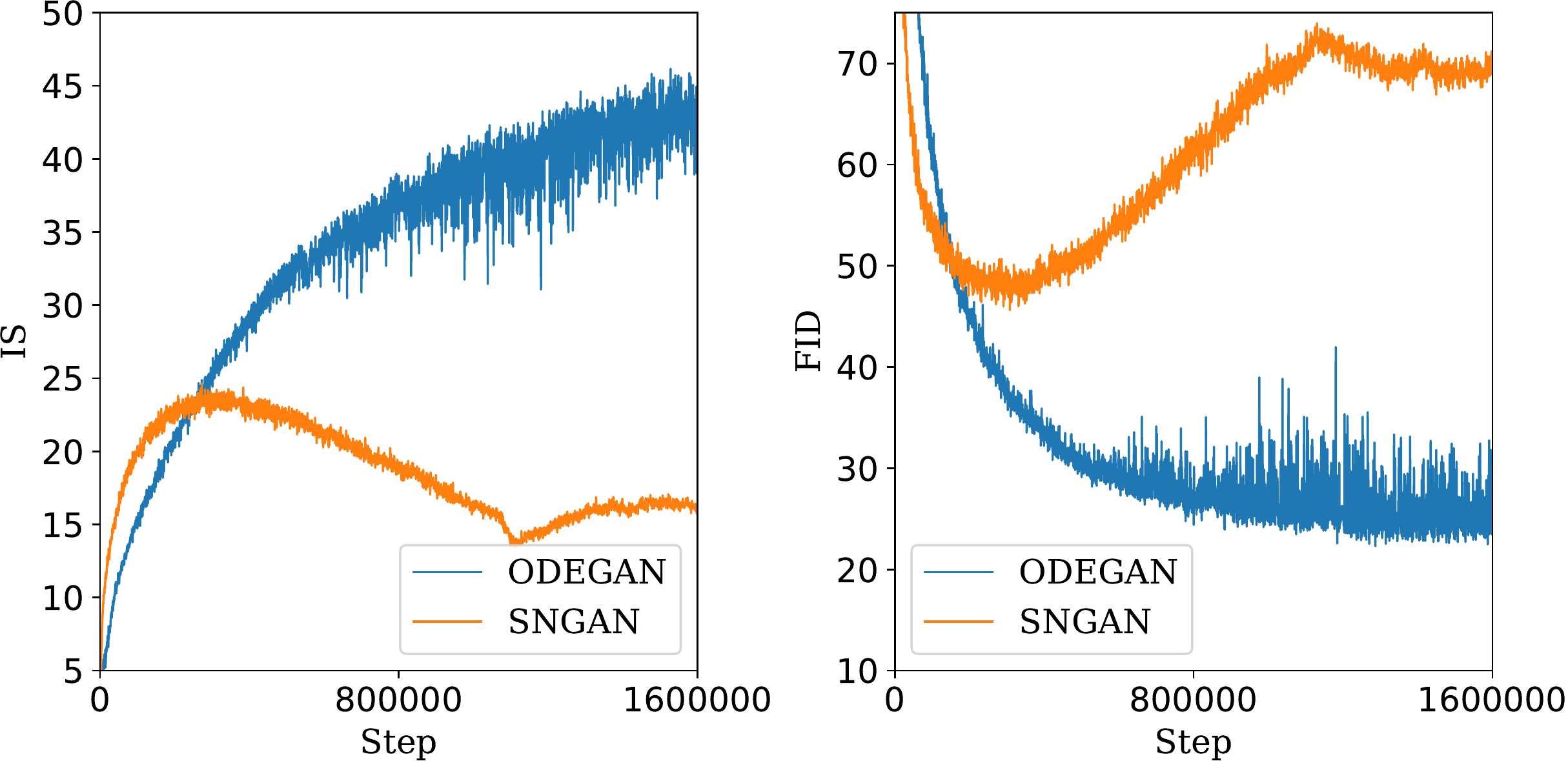}
    \caption{Comparison of IS and FID for ODE-GAN (RK4) vs. SNGAN trained on ImageNet $128 \times 128$. In the plot on the left we used a ResNet architecture similar to~\citet{miyato2018spectral} for ImageNet; on the right we trained with ResNet (large).}
    \label{fig:imagenet}
\end{figure}
\begin{wraptable}{l}{0.7\textwidth}
\vspace{-0.5cm}
\caption{Comparison of ODE-GAN and the SN-GAN trained on ImageNet (\emph{conditional} image generation) on ResNet (large).}
\label{sample-table}
\begin{center}
\begin{tabular}{l|ll}
         & FID (best) / FID (final) & IS (best) / IS (final) \\
\hline
SN-GAN  & $45.62$ / $70.38$ & $24.64$ / $16.01$ \\
ODE-GAN & $\mathbf{ 22.29}$ / $\mathbf{23.46}$ & $\mathbf{46.17}$ / $\mathbf{44.66}$ \\
\end{tabular}\label{tab:imagenet_large}
\end{center}
\label{tab:scores}
\vspace{-0.5cm}
\end{wraptable}
Fig.~\ref{fig:imagenet} shows that ODE-GAN can significantly improve upon SNGAN with respect to both Inception Score (IS) and Fr\'{e}chet Inception Distance (FID). As in \citep{miyato2018spectral} we use conditional batch-normalisation~\citep{dumoulin2017artistic} and projection~\citep{miyato2018cgans}. Similarly to what we have observed in CIFAR-10, the performance degrades over the course of training when we use SNGAN whereas ODE-GAN continues improving. We also note that as we increase the architectural size (i.e. increasing the size of hidden layers $2\times$ and latent size to 256) the IS and FID we obtain using SNGAN gets worse. Whereas, for ODE-GAN we see an improvement in both IS and FID, see Table~\ref{tab:imagenet_large} and Fig.~\ref{fig:imagenet}. We want to note that our algorithm seems to be more prone to landing in NaNs during training for conditional models, something we would like to further understand in future work. 
\subsection{Ablation Studies}
\begin{table}[htb]
\centering
\caption{Ablation Studies for ODE-GAN for CIFAR-10 (DCGAN).}
    \begin{tabular}{p{2cm} p{0.7cm}|| p{4.cm} | p{4.cm}}
    {\bf Solver} &  $\lambda$ & {\bf FID} (best) / {\bf FID}(final)& {\bf IS (best)}/{\bf IS}(final) \\ \hline
    & & \multicolumn{2}{c}{{\bf ODE Solvers}} \\\hline
    Euler & 0.01 & 19.43 $\pm$ 0.03~/~19.76 $\pm$ 0.02 & 7.85 $\pm$ 0.03 ~/~ 7.75 $\pm$ 0.03 \\
    RK2 & 0.01 & 18.66 $\pm$ 0.24~/~18.93 $\pm$ 0.33 & 7.90 $\pm$ 0.01 ~/ ~7.82 $\pm$ 0.00 \\
    RK4 & 0.01 & \bf{17.66 $\pm$ 0.38} ~/~ \bf{18.05 $\pm$ 0.53} & \bf{7.97 $\pm$ 0.03} ~/~ \bf{7.86 $\pm$ 0.09} \\\hline
    & & \multicolumn{2}{c}{{\bf Regularisation}} \\\hline
    RK4 & 0.001 & 25.09 $\pm$ 0.61 ~/~ 26.93 $\pm$ 0.13 & 7.27 $\pm$ 0.07 ~/~ 7.17 $\pm$ 0.03 \\
    RK4 & 0.005 & 18.34 $\pm$ 0.24 ~/~ 18.61 $\pm$ 0.30& 7.82 $\pm$ 0.03 ~/~ 7.79 $\pm$ 0.06 \\
    RK4 & 0.01 & \bf 17.66 $\pm$ 0.38 ~/~ 18.05 $\pm$ 0.53 & \bf 7.97 $\pm$ 0.03 ~/~ 7.86 $\pm$ 0.09\\\hline
    &&\multicolumn{2}{c}{{\bf Adam Optimisation}} \\\hline
    Adam & 0.01 &21.17 $\pm$ 0.10 ~/~ 23.21 $\pm$ 0.25& 7.78 $\pm$ 0.08 ~/~ 7.51 $\pm$ 0.05\\
    RK4 + Adam & 0.01 & 20.45 $\pm$ 0.35 ~/~ 27.74 $\pm$ 1.15 & 7.80 $\pm$ 0.07 ~/~ 7.18 $\pm$ 0.23\\
    Adam & 0.1 &17.90 $\pm$ 0.05 ~/~ 21.88 $\pm$ 0.05& {\bf 8.01 $\pm$ 0.05} ~/~ 7.58 $\pm$ 0.06\\
    RK4 + Adam & 0.1 & {\bf 17.47 $\pm$ 0.30} / \underline{23.20 $\pm$ 0.95} & {\bf 8.00 $\pm$ 0.06} ~/~ \underline {7.59 $\pm$ 0.14}\\
    \end{tabular}
    \label{tab:ablation}
\end{table}
\subsection{Supplementary: Effects of Regularisation}\label{sec:reg_appendix}
\begin{figure}[htb]
    \centering
    \includegraphics[width=0.18\textwidth]{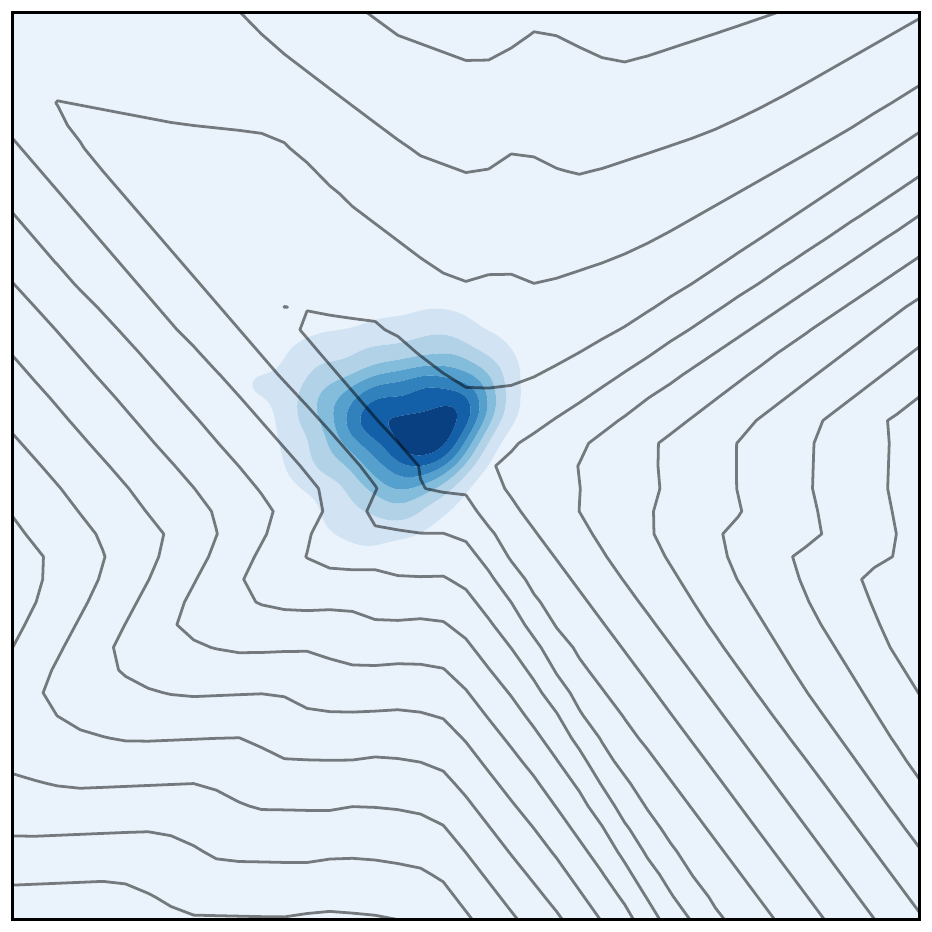}\includegraphics[width=0.18\textwidth]{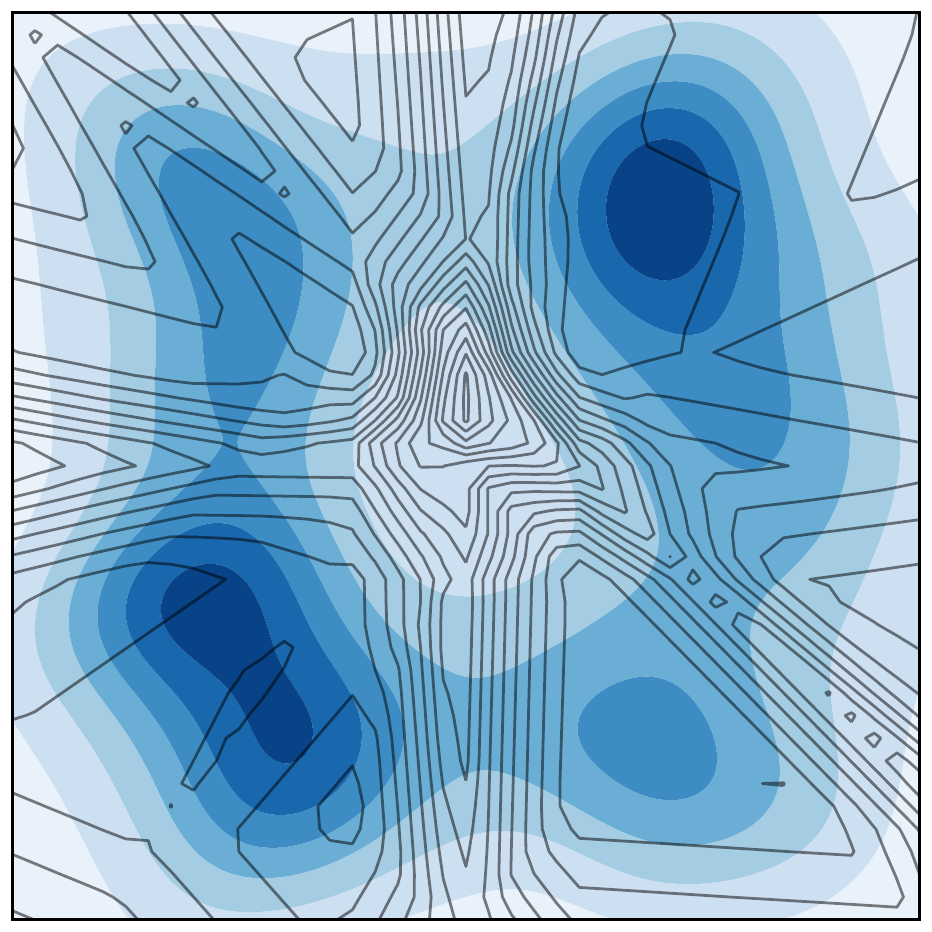}\includegraphics[width=0.18\textwidth]{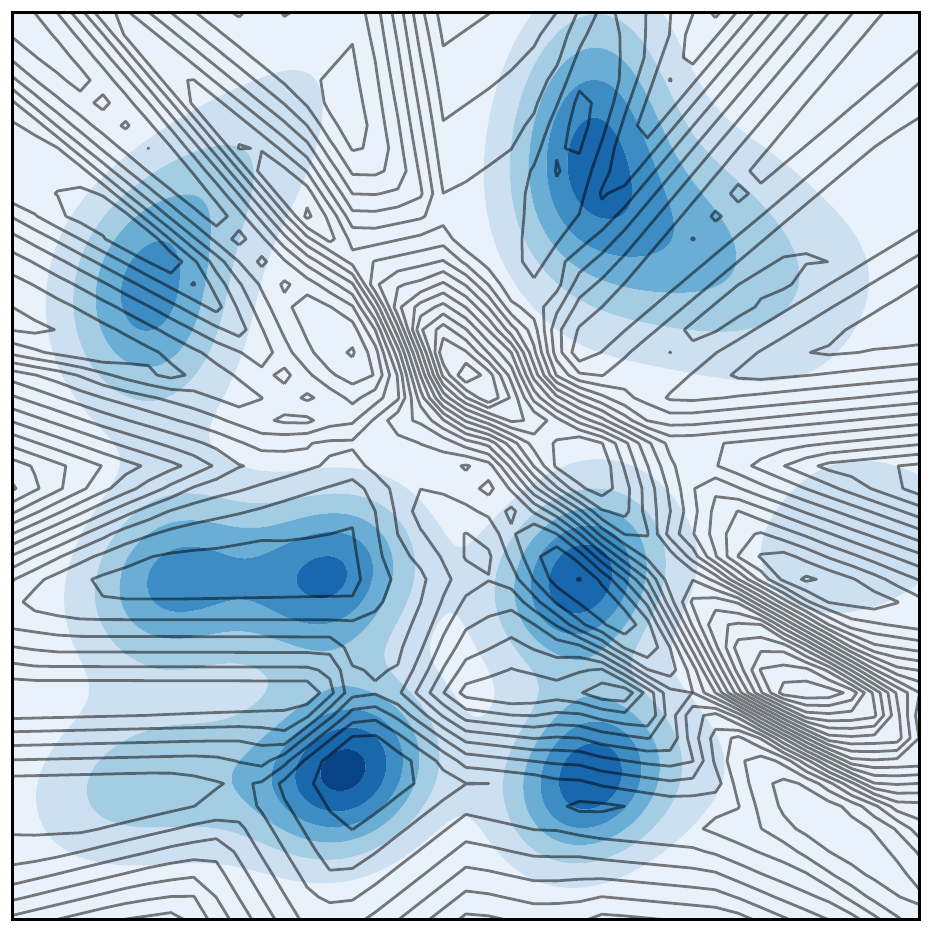}\includegraphics[width=0.18\textwidth]{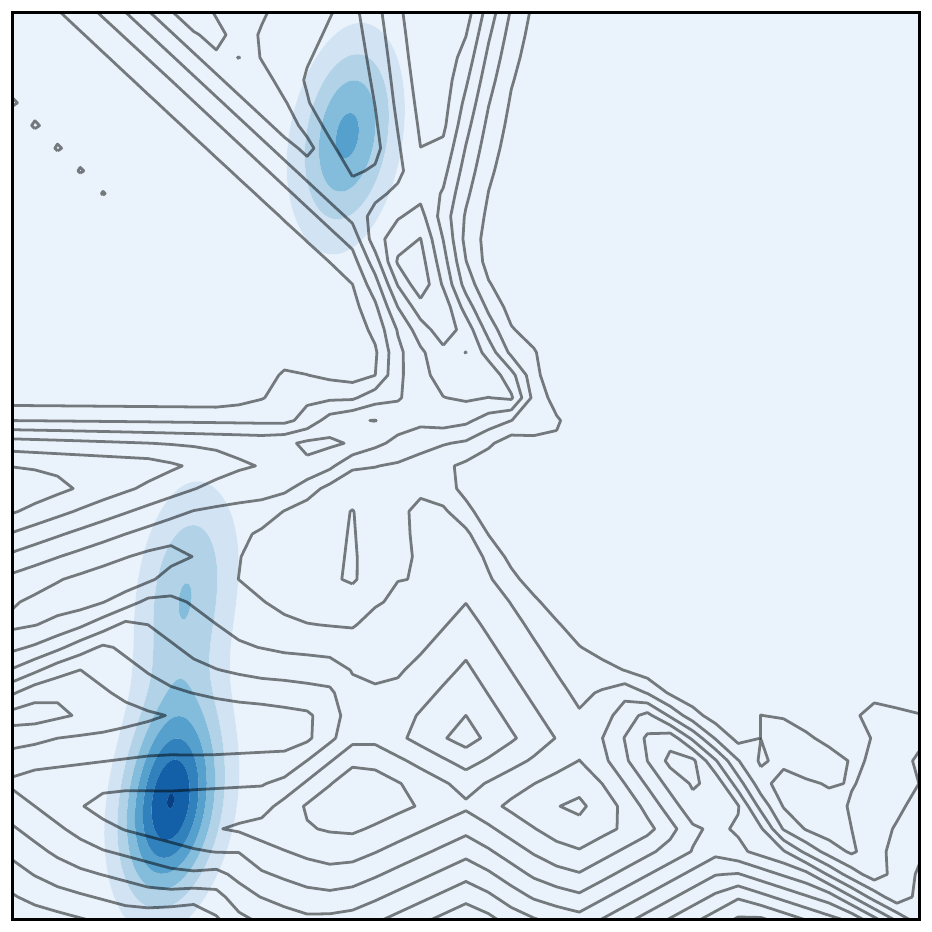}\includegraphics[width=0.27\textwidth]{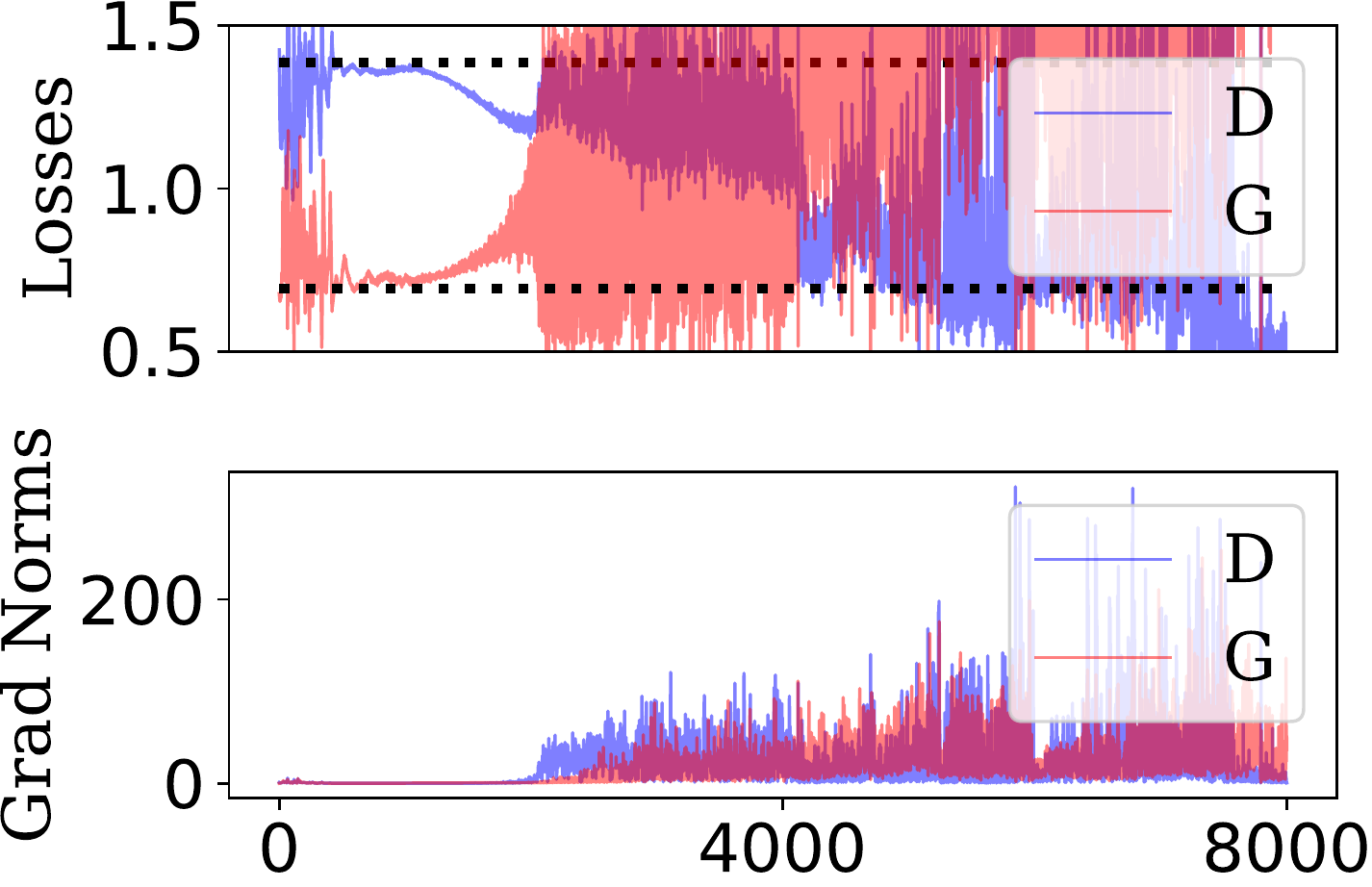}
    \caption{Using RK4 with $\lambda=0.001$, $h=0.03$. We show the progression at 0, 2k, 4k and 6k steps. The right most figure shows the losses and the norms of the gradients. The dashed lines are the Nash equilibrium values ($\log(4)$ for the discriminator $\log(2)$ for the generator).}
    \label{fig:reg_investigation}
\end{figure}
Gradient regularisation allows us to control the magnitude of the gradient. We hypothesise that this helps us control for integration errors. Holding the step size $h$ constant, we observe that decreasing the regularisation weight $\lambda$ lead to increased gradient norms and integration errors (Fig.~\ref{fig:appendix_reg_trunc}), causing divergence. This is shown explicitly in Fig.~\ref{fig:reg_investigation}, where we show that, with low regularisation weight $\lambda$, the losses for the discriminator and the generator start oscillating heavily around the point where the gradient norm rapidly increases. 
\paragraph{Gradient Regularisation vs SN and Truncation Error Analysis}
\begin{figure}[htb]
\centering
    \begin{minipage}{0.49\textwidth}
    \centering
    \includegraphics[width=0.98\textwidth]{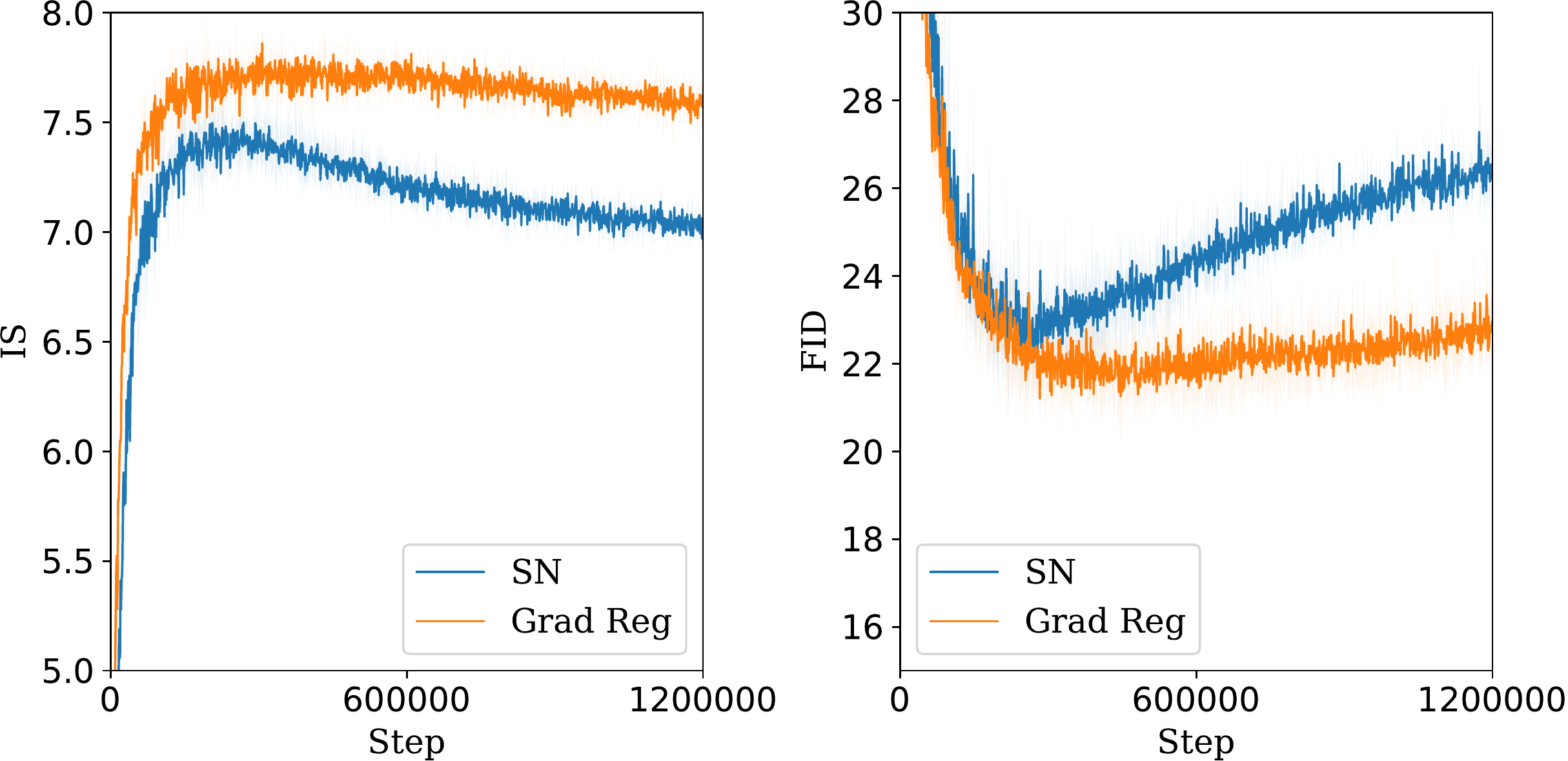}
    \caption{Comparison between SN vs gradient regularisation (without $\mathtt{ODEStep}$). Both are trained with Adam optimisation, $\lambda=0.01$.}
    \label{fig:sn_vs_norm}
\end{minipage}\hspace{0.3cm}
\begin{minipage}{0.39\textwidth}
\centering
 \includegraphics[width=0.85\textwidth]{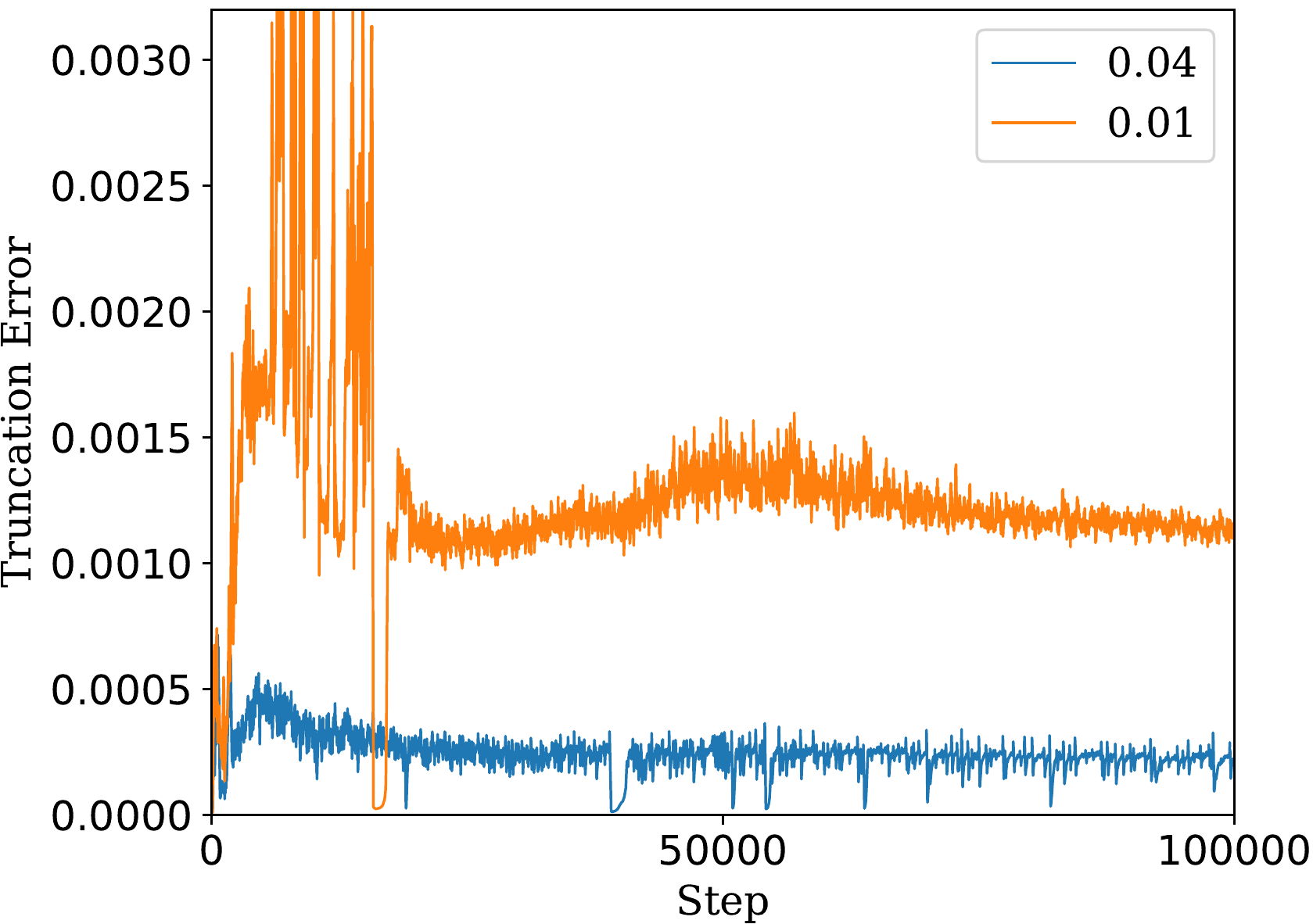}\caption{We plot the truncation error over time for different $\lambda$ (shown in legend) and $h=0.04$.}
    \label{fig:appendix_reg_trunc}
\end{minipage}
\end{figure}
We find that our regularisation (Grad Reg) outperforms spectral normalisation (SN) as measured by FID and IS (Fig.~\ref{fig:sn_vs_norm}). Meanwhile, Fig.~\ref{fig:appendix_reg_trunc} depicts the integration error (from Fehlberg's method) over the course of training. As is visible, heavier regularisation leads to smaller integration errors.
\newpage
\section{ODE-GAN Samples}
\begin{figure}[htb]
    \centering
    \includegraphics[width=0.6\textwidth]{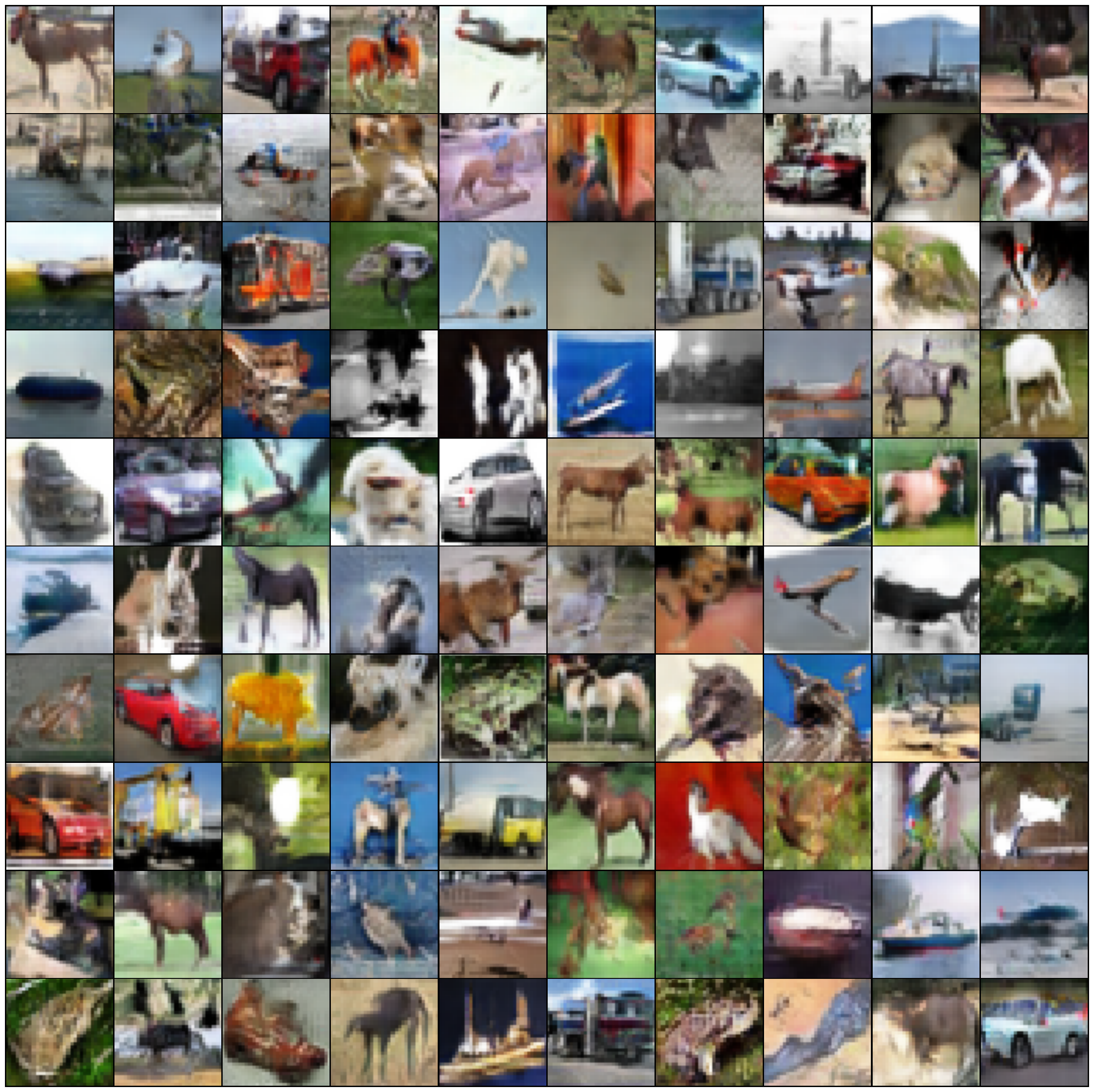}
    \caption{Architecture used DCGAN (CIFAR-10). Sample of images generated using ODE-GAN with $h=0.04$, RK4 as the integrator and $\lambda=0.01$.}
    \label{fig:images}
\end{figure}

\begin{figure}[htb]
    \centering
    \includegraphics[width=0.6\textwidth]{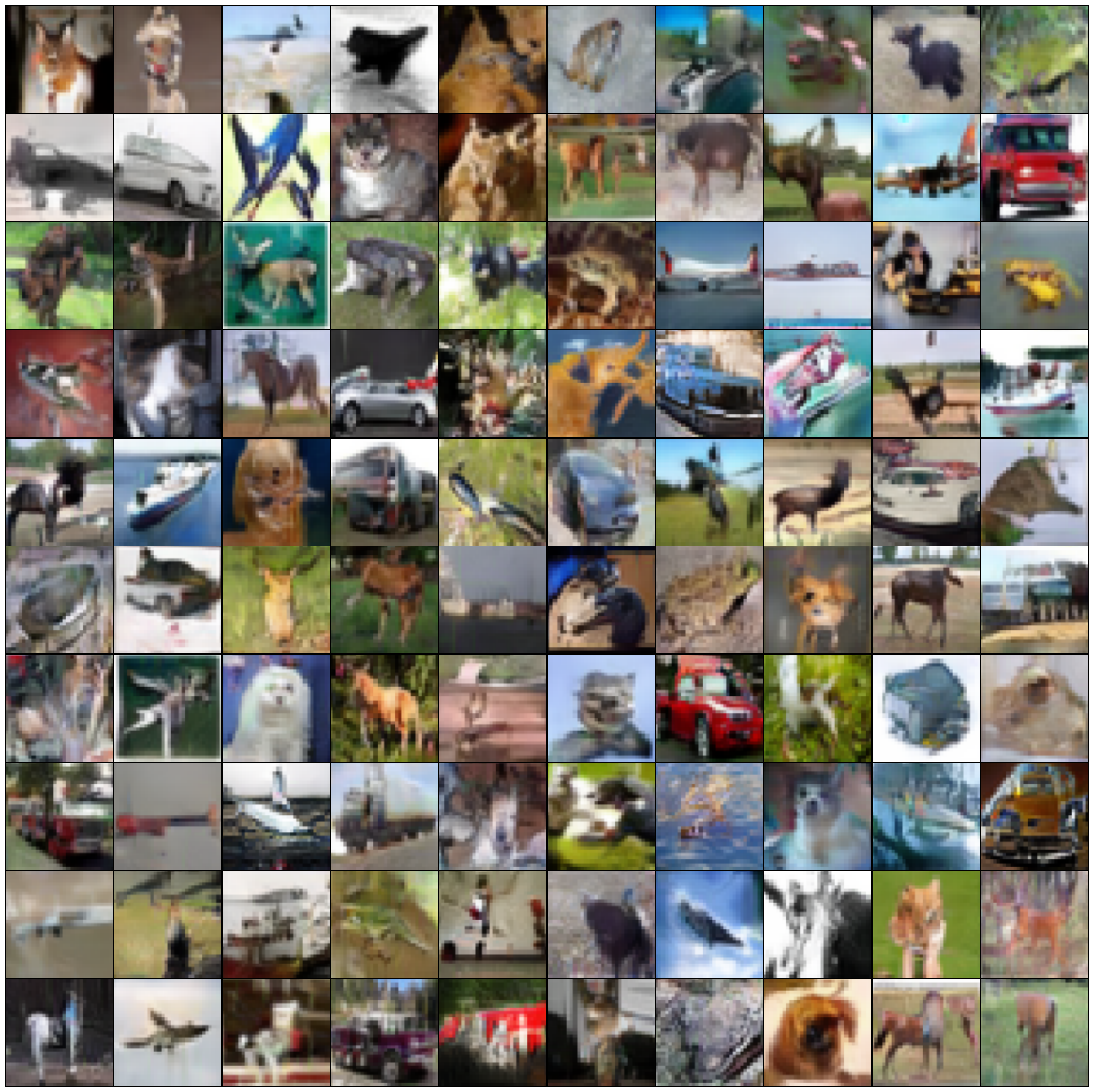}
    \caption{Architecture used ResNet (CIFAR-10). Sample of images generated using ODE-GAN (RK4) with $h=0.01$, RK4 as the integrator and $\lambda=0.01$.}
    \label{fig:resimages}
\end{figure}

\begin{figure}[htb]
    \centering
    \includegraphics[width=0.6\textwidth]{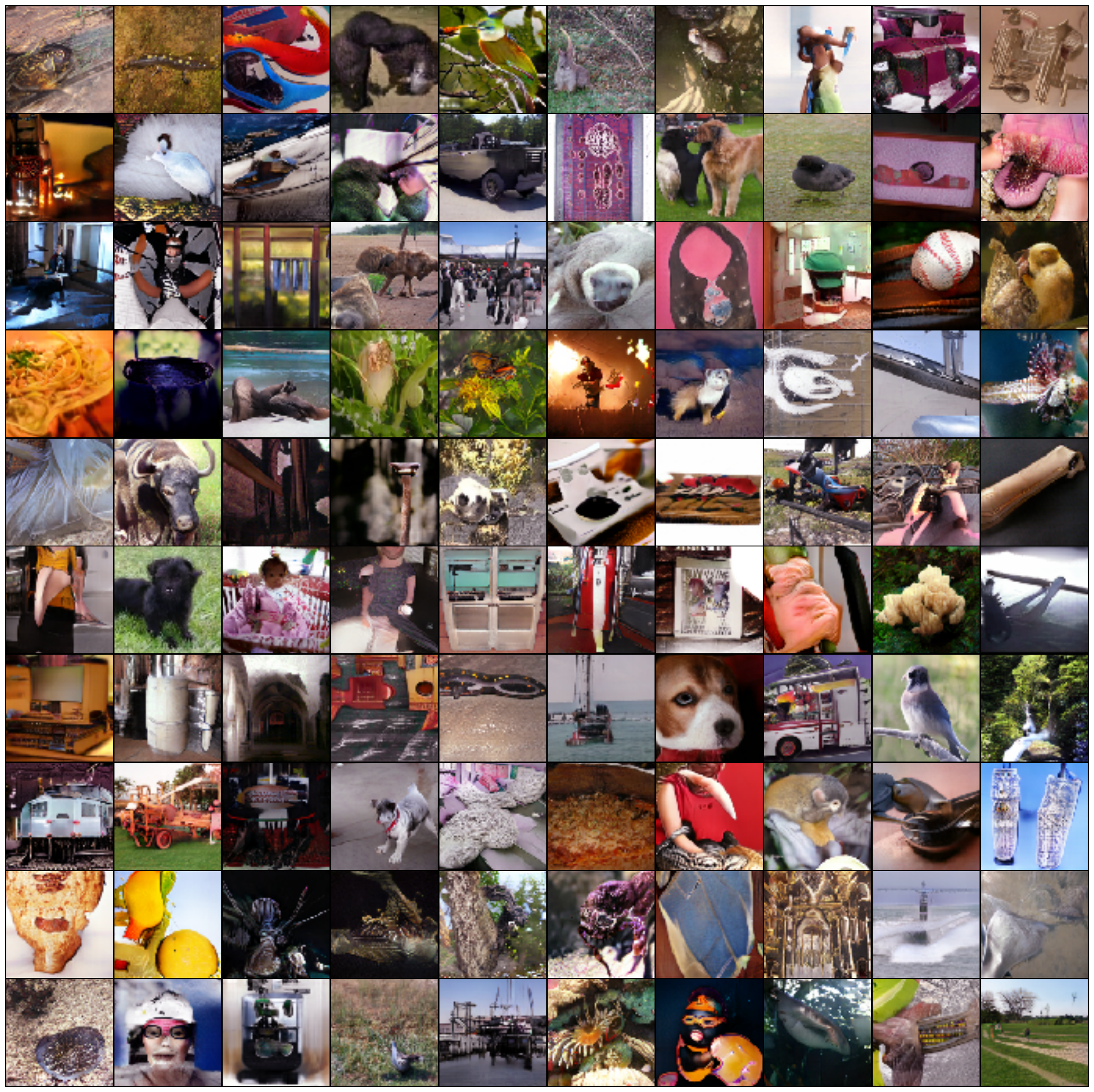}
    \caption{Architecture used ResNet (ImageNet conditional). Sample of images generated using ODE-GAN (RK4) with $h=0.02$, RK4 as the integrator and $\lambda=0.00002$.}
    \label{fig:imagenet_images}
\end{figure}

\end{document}